\newcommand{\dd}{{\rm d}}
\def\1{\bm{1}}
\newcommand\independent{\protect\mathpalette{\protect\independenT}{\perp}}
\def\independenT#1#2{\mathrel{\rlap{$#1#2$}\mkern2mu{#1#2}}}
\DeclareMathAlphabet{\mathsfit}{\encodingdefault}{\sfdefault}{m}{sl}
\SetMathAlphabet{\mathsfit}{bold}{\encodingdefault}{\sfdefault}{bx}{n}
\def\gE{{\mathcal{E}}}
\def\gK{{\mathcal{K}}}
\def\gL{{\mathcal{L}}}
\def\gN{{\mathcal{N}}}
\def\gY{{\mathcal{Y}}}
\newcommand{\E}{\mathbb{E}}
\newcommand{\R}{\mathbb{R}}
\newcommand{\N}{\mathbb{N}}
\newcommand{\KL}{D_{\mathrm{KL}}}
\newcommand{\Cov}{\mathrm{Cov}}
\numberwithin{equation}{section}
\theoremstyle{plain}
\newtheorem{theorem}{Theorem}[section]
\newtheorem{lemma}[theorem]{Lemma}
\newtheorem{corollary}[theorem]{Corollary}
\newtheorem{assumption}[theorem]{Assumption}
\theoremstyle{definition}
\theoremstyle{plain}
\newtheorem{remark}[theorem]{Remark}
\newcommand{\norm}[1]{\|#1\|}
\title{Provably Reliable Classifier Guidance via \\ Cross-Entropy Control}
\author{ Sharan Sahu\footnotemark[1] \\ \texttt{ss4329@cornell.edu} \\ Department of Statistics and Data Science \\ Cornell University \and Arisina Banerjee\footnotemark[1] \\ \texttt{ab2957@cornell.edu} \\ Department of Statistics and Data Science \\ Cornell University \and Yuchen Wu \\ \texttt{yuchen.wu@cornell.edu} \\ School of Operations Research and Information Engineering \\ Cornell University }
\gdef\equalcontrib{
    \renewcommand{\thefootnote}{\fnsymbol{footnote}} 
    \footnotetext[1]{\normalsize These authors contributed equally to this work.}
    \renewcommand{\thefootnote}{\arabic{footnote}}
}
\begin{document}

\maketitle
\equalcontrib
\begin{abstract}
Classifier-guided diffusion models generate conditional samples by augmenting the reverse-time score with the gradient of the log-probability predicted by a probabilistic classifier. 
In practice, this classifier is usually obtained by minimizing an empirical loss function. 
While existing statistical theory guarantees good generalization performance when the sample size is sufficiently large, it remains unclear whether such training yields an effective guidance mechanism.

We study this question in the context of cross-entropy loss, which is widely used for classifier training. 
Under mild smoothness assumptions on the classifier, we show that controlling the cross-entropy at each diffusion model step is sufficient to control the corresponding guidance error. 
In particular, probabilistic classifiers achieving conditional KL divergence $\varepsilon^2$ induce guidance vectors with mean squared error $\widetilde O(d \varepsilon )$, up to constant and logarithmic factors.
Our result yields an upper bound on the sampling error of classifier-guided diffusion models and bears resemblance to a reverse log-Sobolev--type inequality.
To the best of our knowledge, this is the first result that quantitatively links classifier training to guidance alignment in diffusion models, providing both a theoretical explanation for the empirical success of classifier guidance, and principled guidelines for selecting classifiers that induce effective guidance. 

\end{abstract}

\section{Introduction}


\emph{Score-based diffusion models} have become a leading approach to generative modeling, achieving high-quality sample generation across a wide range of applications \cite{sohl2015deep,song2019generative,ho2020denoising,songdenoising,songscore,chen2024overview}. 
At a high level, diffusion models generate samples via a sequential denoising process driven by a sequence of time-dependent \emph{score functions},  which in practice are usually learned from data via the score matching approach \cite{hyvarinen2005estimation}.

Among the many extensions of this framework, \emph{classifier guidance} has become a standard technique in modern diffusion models, 
enabling controllable and high-fidelity conditional sample generation across diverse domains \cite{dhariwal2021diffusion, Ber_Diffusion_MICCAI2024, wang2025guidpaintclassguidedimageinpainting, 44cc1b7eabab4bf68460843575332bc9}. 
Specifically, to generate samples conditioned on a label $y$, the classifier guidance approach trains a probabilistic classifier to approximate the time-dependent label posterior $p_t(y \mid x)$. 
During sampling, a classifier-guided diffusion model augments the unconditional score $\nabla \log p_t(x)$ with the gradient of the log-probability predicted by the learned classifier, thereby steering the generative process toward the desired condition. 
In practice, the classifier used for diffusion guidance is typically learned via empirical risk minimization. 
The most commonly used loss function in this setting is the cross-entropy loss, which is closely related to the  Kullback-Leibler (KL) divergence between the true conditional label distributions given the covariates and the classifier's predictions. 

A large body of statistical learning theory guarantees that, with sufficiently large sample sizes, the trained classifier generalizes well, achieving low classification error or, equivalently, a small conditional KL divergence between the true conditional label distribution and the estimated conditional label distribution \cite{hastie2009elements}.
However, low classification error alone does not guarantee that a trained classifier yields an effective guidance algorithm, as generalization under the cross-entropy loss does not necessarily imply that the gradient of the log-predicted probability aligns with that of the true conditional probability.
In fact, empirical studies suggest that classifier guidance can be brittle: even when the classifier predicts labels correctly, its gradients can point in the wrong direction, leading to oversaturated or semantically inconsistent samples \cite{Wallace2023EndtoEndDL, pmlr-v202-ganz23a, pmlr-v202-dinh23a, dinh2023rethinking}. 
This observation reveals a fundamental theoretical gap: classifier training objectives are typically formulated in terms of \emph{density divergences} (e.g., conditional KL), whereas the diffusion model sampling dynamics depend on the accuracy of a \emph{vector field} $\nabla_x \log p_t(y \mid x)$.
Bridging this gap is non-trivial, since the KL divergence is an $L^1$-type global quantity, 
while the sampling error depends on the $L^2$ accuracy of the gradient vector field and is highly sensitive to local oscillations. 
This challenge motivates the following key question:

\vspace{1em}

\begin{center}
\begin{minipage}{0.8\linewidth}
\raggedright\itshape
Under what conditions does a small conditional label KL divergence (equivalently, cross-entropy) guarantee effective classifier guidance? 
\end{minipage}
\end{center}

\vspace{1em}

More concretely, can we identify conditions under which a small conditional KL divergence guarantees control of the guidance vector field error, and can we construct counterexamples demonstrating that, in the absence of such conditions, small KL divergence does not imply convergence of the guidance vector field?

\subsection{Our contributions}

In this work, we study the theoretical properties of classifier-guided diffusion models and establish tight conditions under which our key question can be answered in the affirmative. 
In particular, we make the following contributions:

\paragraph{Small conditional KL does not necessarily ensure effective guidance. } 
We show that probabilistic classifiers that achieve small conditional KL may fail to provide effective diffusion guidance.
To illustrate this, we construct a sequence of classifiers that attain vanishingly small conditional KL, while the associated guidance errors remain bounded away from zero or even tend to infinity. 
Our construction of classifiers is based on adding high-frequency perturbations to the true conditional label distributions.
We show that for a specific sequence of perturbation amplitudes scaling with $\Theta(1/\sqrt{n})$, the conditional KL divergence vanishes at a rate of $O(1/\sqrt{n})$, yet the guidance mean squared error (MSE) diverges as $\Omega(n)$. 
On the other hand, if the perturbation amplitudes scale as $\Theta(1/n)$, then the conditional KL vanishes at a faster rate of $O(1 / n)$, while the guidance MSE remains bounded above zero. 
Our findings disprove the belief that a small classification error inherently guarantees effective classifier guidance.

\paragraph{Small conditional KL implies effective  guidance under classifier smoothness conditions.}
On the other hand, we show that  probabilistic classifiers that achieve small conditional KL ensure effective guidance, provided that the classifier satisfies certain smoothness conditions. 
Specifically, if the data distribution has bounded support and the probabilistic classifier used to implement guidance satisfies the same smoothness conditions as the ground-truth conditional label probabilities, then a conditional KL divergence of $O(\varepsilon^2)$ between the target label conditionals and the classifier-predicted conditionals implies a guidance MSE of $\widetilde O(d\varepsilon)$, up to constant and logarithmic factors. 
We further show that this dependence on $\varepsilon$ is tight by constructing an explicit instance that attains this rate.

 \paragraph{Implications for sampling with classifier-guided diffusion models.}
As an application of our result, we use it to bound the sampling error in classifier-guided diffusion models. Specifically, we show that for Denoising Diffusion Probabilistic Models (DDPMs) with classifier guidance, if the classifiers at each diffusion model step satisfy the same smoothness conditions as the corresponding label conditionals, then the KL divergence between the target and output distributions by this diffusion model scales as $\widetilde O(d \varepsilon_{\rm average})$ (ignoring errors from score estimation and discretization), where $\varepsilon_{\rm average}$ is the average of $\varepsilon_t$, with $\varepsilon_t^2$ denoting the conditional KL achieved by the classifier at time $t$.



\subsection{Related work}
\paragraph{Score-based generative models.}
This line of research traces back to the principle of score matching \cite{JMLR:v6:hyvarinen05a}, 
which seeks to learn a target distribution $p_0(x)$ via estimating its score function $\nabla \log p_0(x)$. 
Subsequent work has shown that score matching corresponds to minimizing the Fisher divergence over some function classes, 
and in certain statistical models is closely related to maximum likelihood estimation \cite{lyu2012interpretation,sriperumbudur2017density,koehlerstatistical}.
Leveraging score matching, score-based generative models, in particular diffusion models, have emerged as a powerful class of generative models, attaining state-of-the-art performance in numerous applications \cite{podellsdxl,betker2023improving,brooks2024video}. 
Representative examples of score-based generative models include Denoising Diffusion Probabilistic Models (DDPM) \cite{ho2020denoising} and Denoising Diffusion Implicit Models (DDIM) \cite{songdenoising}. 
Various theoretical results have been established in attempt to explain the success of score-based generative models, focusing on different perspectives ranging from convergence \cite{lee2023convergence,chen2023sampling,bentonnearly,gupta2024faster,li2024unified,wu2024stochastic,liang2024broadening,huang2025convergence,li2025faster,citation-0}, score learning \cite{wibisono2024optimal,oko2023diffusion,zhang2024minimax,dou2024optimal,koehler2023statistical,wang2024evaluating}, to statistical applications \cite{el2022sampling,montanari2023posterior,xu2024provably,mei2025deep,chen2025diffusion}. We refer interested readers to \cite{chen2024overview} for an overview of recent theoretical advances in diffusion models.
In these examples, sample generation relies on accurately approximating the score functions of progressively noised data distributions, given by $\nabla \log p_t(x)$ at a collection of time points $t$.

\paragraph{Classifier-guided diffusion models.}

As discussed in the previous sections, classifier guidance \cite{dhariwal2021diffusion} is a technique that steers the sample generation of an unconditional diffusion model toward the conditional distribution associated with a class label $y$, via augmenting the base model's score with the gradient log-probability of a separately trained, time-dependent probabilistic classifier $\nabla_x \log \hat{p}_{t}(y \mid x)$. 
The effectiveness of this framework critically depends on the quality of the guidance vector $\nabla_x \log \hat{p}_{t}(y \mid x)$, which ideally approximates the ground-truth guidance vector $\nabla_x \log p_t(y \mid x)$.
Notably, several works analyze the behavior of guided diffusion models assuming access to the true guidance vectors \cite{wu2024theoretical,bradley2024classifier,chidambaram2024does,li2025provable,jiao2025towards}.

Recent work has explored, both empirically and theoretically, the role of classifier quality in the design of effective guidance algorithms.
For example, \cite{ma2024elucidating} empirically observes that standard off-the-shelf classifiers are often poorly calibrated, and argue that classifier smoothness is crucial for reliable gradient estimation. On the theoretical side, they establish that $L^2$ convergence of densities implies $L^2$ convergence of the corresponding guidance vectors under suitable smoothness assumptions. 
Their analysis requires the densities to be bounded above and below on a compact and convex set $\Omega$, and the estimators to belong to a Sobolev space $H^k(\Omega)$ with smoothness $k>1$. 
However, the lower bound assumption could be restrictive, as it precludes regions where the density is vanishingly small.
Another work \cite{oertell2025efficientcontrollablediffusionoptimal} focuses on the binary classification setting, assuming that the conditional probabilities are smooth and follow a specific form determined by a bounded reward function. 
Their classifier construction relies critically on knowledge of this reward function.
More recently, several papers analyze diffusion guidance from different perspectives. 
For example, \cite{guo2024gradient} recasts diffusion guidance as solving a regularized optimization problem along the diffusion trajectory, 
and \cite{tang2024conditional} develops a stochastic-analysis framework for understanding and analyzing diffusion guidance. 
On the application side, \cite{sun2024rectifid} shows that classifier guidance remains powerful in rectified-flow models when coupled with a fixed-point formulation and anchored trajectories.

\paragraph{Information-theoretic divergences.}
From a theoretical perspective, our work is closely related to the connection between the KL divergence and the Fisher divergence. 
The relationship between these two divergences has long been a central topic in information theory and probability. 
For example, de Bruijn's identity relates the \emph{time-derivative} of the KL divergence (under a Gaussian convolution) to the Fisher information and has been extended and refined in various settings (e.g., \cite{23c0155fc92345fca7d191676fbb9ff6}). 
Beyond the infinitesimal regime, a growing literature studies divergences built directly from Fisher information. In particular, \cite{sanchezmoreno2012jensen} introduces the Jensen-Fisher divergence that parallels the Jensen-Shannon divergence but with Fisher information in place of entropy, while \cite{kharazmi2022cumulative} and follow-up works develop generalized and $q$-Fisher information measures together with their associated Jensen-type divergences. 
\cite{sriperumbudur2017density} analyze density estimation in (possibly infinite-dimensional) exponential families under Fisher divergence, emphasizing that control of Fisher divergence is strictly stronger than control of KL or total variation in many natural settings. 
In addition, if a distribution satisfies a logarithmic Sobolev inequality, then the associated KL divergence can be bounded in terms of the relative Fisher information.

Our paper addresses a different question: we ask under what conditions a small conditional KL divergence over the label distribution implies a small MSE of the guidance vector, which is related to, but not identical to the Fisher divergence. 
This question is closely related in spirit to the {reverse} log-Sobolev inequalities studied in the functional-inequality literature.
For instance, \cite{eldan2018gaussianwidthgradientcomplexityreverse, eldan2019dimensionfreereverselogarithmicsobolev} show that, for Gaussian reference measures and distributions of low ``gradient complexity'', one can derive almost-tight reverse log-Sobolev inequalities that control Fisher information by entropy. 




\paragraph{Classifier-free guidance.}
A popular alternative to classifier-guided approach is classifier-free guidance \cite{ho2021classifierfree}. 
This method eliminates the need for a separately trained classifier by jointly training a single diffusion model on both conditional and unconditional objectives, via randomly dropping the conditioning information during training.
At sampling time, guidance is achieved by steering the process using a scaled difference between the conditional and unconditional score estimates produced by this single model.


\subsection{Organization}

The rest of this paper is organized as follows.
Section \ref{sec:preliminaries} introduces some preliminaries
regarding diffusion models and classifier guidance. 
We state in Section \ref{sec:main} our main theoretical results, and we present in Section \ref{sec:experiments} numerical experiments that support our theoretical findings. 

\section{Preliminaries}
\label{sec:preliminaries}
\subsection{Notation}
For $n \in \N_+$, define $[n] = \{1, 2, \cdots, n\}$. 
For $r>0$, we denote the closed Euclidean ball of radius $r$ by $\mathcal{B}_r = \{ x \in \R^d : \|x\|_2 \le r \}$,  and the unit sphere by $\mathbb{S}^{d-1} = \{ x \in \R^d : \|x\|_2 = 1 \}$. 
We write $C^k(\R^d)$ for the set of $k$-times continuously differentiable functions $f:\R^d\to\R$.  
For two positive sequences $\{a_n\}_{n = 1}^{\infty}$ and $\{b_n\}_{n = 1}^{\infty}$, 
we write $a_n \lesssim b_n$ if there exists a constant $C>0$ that is independent of $n$, such that $a_n \le Cb_n$. 
We write $a_n \asymp b_n$ if $a_n \lesssim b_n$ and $b_n \lesssim a_n$. 
We write $a_n = O(b_n)$ if $a_n \le Cb_n$ for some constant $C > 0$ that is independent of $n$, we write $a_n = \Omega(b_n)$ if $b_n = O(a_n)$, and write $a_n = \Theta(b_n)$ if $a_n = O(b_n)$ and $b_n = O(a_n)$. 
We write $a_n = \widetilde O(b_n)$ to mean that there exists a constant $C>0$ that is independent of $n$, and a poly-logarithmic factor $\mathrm{polylog}(\Pi)$ in the relevant problem parameters $\Pi$, such that $|a_n| \le C\,b_n \cdot \mathrm{polylog}(\Pi)$. 
We use $\nabla_x \cdot$ to represent the divergence operator with respect to the variable $x$.

\subsection{Information-theoretic quantities}

For two distributions $P$ and $Q$, the KL divergence between $P$ and $Q$ is 
\begin{align*}
    \KL(P \parallel Q) = \int P(\dd x) \log \frac{P(\dd x)}{Q(\dd x)}. 
\end{align*}
The total variation (TV) distance between $P$ and $Q$
 is  
\begin{align*}
    \mathrm{TV}(P,Q) = \sup_A |P(A) - Q(A)|. 
\end{align*}
The cross-entropy between $P$ and $Q$ is 
\begin{align*}
    H(P, Q) = - \E_P[\log Q]. 
\end{align*}
Note that $H(P, Q) = H(P) + \KL(P \parallel Q)$, where $H(P)$ denotes the entropy of $P$. 
Closely related to cross-entropy is the cross-entropy loss $\mathcal{L}_{CE}$. 
Specifically, for a covariate $x$ and a label $y \in \mathcal{Y}$, 
the cross-entropy loss associated with a probabilistic classifier $\hat p$ is 
\begin{align*}
    \mathcal{L}_{\rm CE} = -\sum_{y' \in \mathcal{Y}} \mathbbm{1}_{y = y'} \log \hat p(y' \mid x). 
\end{align*}
When $(x, y) \sim p$, we have 
\begin{align*}
    \E[\mathcal{L}_{\rm CE}] =\E_{x \sim p}\big[ H(p(\cdot \mid x), \hat p(\cdot \mid x)) \big]. 
\end{align*}
Therefore, minimizing the empirical cross-entropy loss often leads to a small conditional KL divergence between the true and estimated label conditionals. 




\subsection{Diffusion models through a continuous-time perspective}
\label{subsec:ou-diffusion}

Diffusion models are often studied via their continuous-time limits, and are formulated as the numerical approximations to these limits.  
Conceptually, diffusion models consist of a forward and a reverse process. A common choice for the forward process is based on the Ornstein-Uhlenbeck (OU) process.
Specifically, let the data distribution $P_{\mathrm{data}}$ be a probability
measure over $\R^d$. 
The OU process $(\overrightarrow{X}_t)_{t\in[0,T]}$ is defined as the solution to the following SDE:
\begin{equation}
  d\overrightarrow{X}_t = -\overrightarrow{X}_t\,dt + \sqrt{2}\,dB_t,
  \qquad \overrightarrow{X}_0 \sim P_{\mathrm{data}},
  \label{eq:ou-forward}
\end{equation}
where $(B_t)_{t\in[0,T]}$ is a standard Brownian motion in $\R^d$. Process \eqref{eq:ou-forward} has an explicit solution:
\[
  \overrightarrow{X}_t = \lambda_t \overrightarrow{X}_0 + \sigma_t Z,
  \qquad Z \sim \gN(0,I_d) \;\;\mbox{and}\;\; \overrightarrow{X}_0 \independent Z,
\]
where $\lambda_t = e^{-t}$ and $\sigma_t^2 = 1 - e^{-2t}$. 
Conditioned on $\overrightarrow{X}_0=x_0$, we therefore have the Gaussian transition kernel
\[
  \overrightarrow{X}_t \mid \overrightarrow{X}_0 = x_0
  \sim \gN\big(\lambda_t x_0, \sigma_t^2 I_d\big),
  \qquad
  p_t(x_t \mid x_0)
  = (2\pi\sigma_t^2)^{-d/2}
    \exp\Big(
      -\frac{\|x_t - \lambda_t x_0\|_2^2}{2\sigma_t^2}
    \Big).
\]
With a slight abuse of notations,
we also use $p_t$ to denote the marginal density of $\overrightarrow{X}_t$, and we define
\[
  m_t(x_t) = \E[\overrightarrow{X}_0 \mid \overrightarrow{X}_t = x_t] \in \R^d,
  \qquad
  \Sigma_t(x_t) = \Cov[\overrightarrow{X}_0 \mid \overrightarrow{X}_t = x_t] \in \R^{d \times d}
\]
as the posterior mean and covariance of $\overrightarrow{X}_0$ given $\overrightarrow{X}_t=x_t$.
The following equation is a straightforward consequence of Tweedie's formula \cite{robbins1992empirical}:
\begin{equation*}
  \nabla_x \log p_t(x)
  = -\sigma_t^{-2} x + \lambda_t \sigma_t^{-2} m_t(x),
  \qquad x\in\R^d.
  \label{eq:tweedie-unconditional}
\end{equation*}
Likewise we define the time-reversal $(\overleftarrow{X}_t)_{t\in[0,T]}$ of process \eqref{eq:ou-forward} by setting $\overleftarrow{X}_t = \overrightarrow{X}_{T-t}$,  so that $\overleftarrow{X}_t \sim p_{T - t}$. 
Under mild regularity assumptions on the target distribution, the process
$(\overleftarrow{X}_t)_{t\in[0,T]}$ satisfies the following SDE:
\begin{equation}
  \dd \overleftarrow{X}_t
  = \big\{ \overleftarrow{X}_t + 2\nabla_x \log p_{T-t}(\overleftarrow{X}_t) \big\}\,\dd t
    + \sqrt{2}\,\dd B'_t,
  \qquad \overleftarrow{X}_0 \sim p_T,
  \label{eq:reverse-sde}
\end{equation}
where $(B'_t)_{t\in[0,T]}$ is another Brownian motion in $\R^d$ that is independent of
$\overleftarrow{X}_0$ \cite{ANDERSON1982313, cattiaux2022timereversaldiffusionprocesses}. 
Sampling $\overleftarrow{X}_T$ according to process \eqref{eq:reverse-sde} yields samples from
$P_{\mathrm{data}}$. In practice, the score $\nabla_x \log p_t(x)$ is not accessible and is approximated by a neural network $s_\theta(x,t)$ trained to minimize
the following score-matching objective
\begin{equation*}
  \gL_{\mathrm{score}}(s_\theta)
  =
  \int_\delta^T
    \E_{X_t \sim p_t}
    \big[
      \|s_\theta(X_t,t) - \nabla_x \log p_t(X_t)\|_2^2
    \big]\,\dd t,
\end{equation*}
where $\delta > 0$ denotes an early stopping cutoff.
In practice, an empirical version of this objective is minimized using samples from the target distribution, for instance via score matching.
To simulate the reverse dynamics (\ref{eq:reverse-sde}), a standard approach is to discretize time on a grid
\[
  0 = t_0 < t_1 < \cdots < t_N < T,
  \qquad \tau_k = t_{k+1} - t_k.
\]
%
For sufficiently large $T$, we have $p_T \approx \pi_d$, where $\pi_d$ denotes the distribution of a $d$-dimensional standard Gaussian random vector. 
Consequently, it is natural to initialize the reverse process from $\pi_d$.
In practice, rather than tracking the reverse process all the way to time $T$, one often employs \emph{early stopping} at time $T-\delta$ for some
small $\delta>0$, 
in order to mitigate numerical instability arising from the score $\nabla_x \log p_t$ as $t\downarrow 0$.


\subsection{Conditional sampling through classifier guidance}
\label{subsec:classifier-guidance}

Classifier guidance is a standard technique in diffusion models that enables conditional sampling by augmenting the drift term of the reverse process \eqref{eq:reverse-sde} with the gradient of a trained probabilistic classifier.
Specifically, we use $Y$ to represent the label of interest, which takes values in a finite label set $\gY$ with some prior. 
The label $Y$ is correlated with $\overrightarrow{X}_0$. 
With a slight abuse of notation, we also use $P_{\rm data}$ to represent the joint distribution of $(\overrightarrow{X}_0, Y)$, and use $p_t$ to represent the joint distribution of $(\overrightarrow{X}_t, Y)$ at time $t$. 
Since the OU forward process \eqref{eq:ou-forward} operates solely on the $X$-coordinate,
the label prior remains unchanged over time, so that $p_t(y) = P_{\rm data}(y)$ for all $t \in [0, T]$. 

Our goal is to sample from the conditional distribution $\overrightarrow{X}_0 \mid Y = y$.
While one could train a separate conditional diffusion model for this task, doing so would require retraining the entire model for each new condition, which can be computationally expensive. 
Instead, classifier guidance employs a probabilistic classifier associated with the target condition to steer an unconditional diffusion model toward the desired conditional distribution, incurring substantially lower training cost.

To motivate classifier guidance, we consider the exact conditional diffusion model, whose score function at time $t$ is 
\begin{equation*}
  \nabla_x \log p_t(x\mid y)
  = \nabla_x \log p_t(x)
    + \nabla_x \log p_t(y\mid x).
\end{equation*}
The conditional score decomposes into an unconditional score plus a guidance term $\nabla_x \log p_t(y\mid x)$. 
In the idealized setting where the exact score and guidance terms are available, the  corresponding reverse SDE is given by
\begin{equation}
  \dd \overleftarrow{X}_t^{(y)}
  =
  \Big\{
    \overleftarrow{X}_t^{(y)}
    + 2\nabla_x \log p_{T-t}(\overleftarrow{X}_t^{(y)})
    + 2 \,\nabla_x \log p_{T - t}(y\mid \overleftarrow{X}_t^{(y)})
  \Big\} \dd t
  + \sqrt{2}\,\dd B'_t, \qquad X_t^{(y)} \sim p_T(\cdot \mid y).
  \label{eq:true-guided-sde}
\end{equation}
%
In practice, $p_t(y\mid x)$ is unknown and must be approximated by a trained probabilistic classifier $\hat p_{t}(y\mid x)$. 
A common objective is to minimize the conditional label KL
\begin{equation*}
  \gL_{\mathrm{cls}}(\hat p_t; t)
  =
  \E_{X_t \sim p_t}
  \Big[
    \KL\big(p_t(\cdot\mid X_t)\,\Vert\,\hat p_{t}(\cdot\mid X_t)\big)
  \Big], 
\end{equation*}
which can be approximately achieved by minimizing the empirical cross-entropy loss.
Given a trained probabilistic classifier, the classifier guidance approach uses the gradient of the log predicted probability to define an {approximate guidance vector field}, 
which is then incorporated into the classifier-guided reverse SDE together with the pre-trained score:
\begin{equation}
  \dd \hat X_t^{(y)}
  =
  \Big\{
    \hat X_t^{(y)}
    + 2  s_\theta(\hat X_t^{(y)}, T-t)
    + 2 \gamma \nabla_x \log \hat p_{T - t}(y\mid \hat X_t^{(y)})
  \Big\} \dd t
  + \sqrt{2}\,\dd B_t', \qquad \hat X_0^{(y)} \sim \pi_d,
  \label{eq:approx-guided-sde}
\end{equation}
where $\gamma \geq 0$ is a hyperparameter that controls the guidance strength. 
In practice, a discretized version of process \eqref{eq:approx-guided-sde} yields a classifier-guided diffusion model.
When setting $\gamma = 1$, process \eqref{eq:approx-guided-sde} approximates process \eqref{eq:true-guided-sde} and, with accurate drift estimates, approximately terminates at the target conditional distribution.
In this case, a standard application of Girsanov's Theorem shows that the KL divergence between processes \eqref{eq:true-guided-sde} and \eqref{eq:approx-guided-sde} is  controlled by the averaged score MSE and the averaged guidance term MSE. 
In particular, the guidance term MSE at time $t$ is given by
\begin{equation*}
  \gE_{\mathrm{guid}}(t,y)
  =
  \mathbb{E}_{X_t \sim p_t(\cdot\mid y)}
  \Big[
    \|
      \nabla_x \log p_t(y\mid X_t)
      - \nabla_x \log \hat p_{t}(y\mid X_t)
    \|_2^2
  \Big].
\end{equation*} 
Notably, in practice, setting $\gamma > 1$ improves conditional fidelity at the expense of sample diversity.
In such settings, the goal is not to sample exactly from $P_{\rm data}(\cdot \mid y)$, but rather to generate visually convincing samples.
The associated distribution induced by setting $\gamma > 1$ remains unclear. 
In this work, we focus on the case $\gamma = 1$, which allows us to evaluate sampling performance by directly comparing the output distribution to $P_{\rm data}(\cdot \mid y)$. 
Extending the analysis to $\gamma > 1$ is an interesting direction for future work.
 
\section{Main results}
\label{sec:main}

\subsection{Controlling conditional KL does not guarantee effective guidance
}
\label{subsec:kl-not-score}

We begin by showing that, in the absence of any regularity assumptions on the classifiers, a small conditional KL divergence between the true conditional label distribution and its approximation does not ensure convergence of the guidance term. 
Concretely, we construct a sequence of probabilistic classifiers $(\hat p_n)_{n\ge1}$ that achieve vanishingly small conditional KL, yet the associated guidance vector remains misaligned.

\vspace{0.5em}

\begin{theorem}
\label{thm:kl-not-score}
For any density $p$ such that $p(\{x: 0 < p(y \mid x) < 1\} \mid y) > 0$ and the map $x \mapsto p(y \mid x)$ is differentiable, 
there exists a sequence of classifiers $\{\hat p_n\}_{n = 1}^{\infty}$, such that almost surely over $x \sim p$, 
\[
  \KL (p(\cdot \mid x) \parallel \hat p_n(\cdot \mid x)) \to 0 \qquad \mbox{as }n \to \infty, 
\]
while  
\[
  \liminf_{n \to \infty} \E_{x \sim p(\cdot \mid y)}\bigl[\|
         \nabla_x \log p(y \mid x) - \nabla_x \log \hat p_n(y \mid x) \|_2^2\bigr] > 0. 
\]
In fact, we can choose $\{\hat p_n\}_{n = 1}^{\infty}$, such that either
\begin{enumerate}
    \item $\sup_{x \in \mathbb{R}^d}\KL(p(\cdot \mid x) \parallel \hat p_n(\cdot \mid x)) = O(1 / n)$ and $\E_{x \sim p(\cdot \mid y)}[\|
         \nabla_x \log p(y \mid x) - \nabla_x \log \hat p_n(y \mid x) \|_2^2 ] = \Omega (1)$, or
    \item $\sup_{x \in \mathbb{R}^d}\KL(p(\cdot \mid x) \parallel \hat p_n(\cdot \mid x)) = O(1 / \sqrt{n})$ and $\E_{x \sim p(\cdot \mid y)}[\|
         \nabla_x \log p(y \mid x) - \nabla_x \log \hat p_n(y \mid x) \|_2^2 ] = \Omega (n)$. 
\end{enumerate}


\end{theorem}

\begin{proof}[Proof of \cref{thm:kl-not-score}]

We prove \cref{thm:kl-not-score} in Appendix \ref{sec:proof:thm:kl-not-score}. A construction of $\{\hat p_n\}_{n = 1}^{\infty}$ can also be found therein. 
\end{proof}

\begin{remark}
    The conditions required for Theorem~\ref{thm:kl-not-score} are quite general: they only require differentiability and that we cannot identify the target label $y$ with probability one. 
    Note that if $p(\{x: p(y \mid x) = 0 \mbox{ or }1\} \mid y) = 1$, 
    then it is impossible to construct such a sequence of classifiers $\{\hat p_n\}_{n = 1}^{\infty}$, since any modification of $p(y \mid x)$ on a set of positive $p(\cdot \mid y)$-probability would incur infinite conditional KL divergence. 
\end{remark}

\cref{thm:kl-not-score} shows that, for very general target distributions, controlling the conditional KL does not guarantee control of the guidance-term MSE. 
Any positive result must therefore rely on additional regularity assumptions on the classifier used.
Such structural regularity arises naturally in diffusion models, since the OU forward process \eqref{eq:ou-forward} smooths the data distribution at all positive times $t>0$, which in turn induces smoothness in the corresponding conditional label probabilities. 
A more detailed discussion is provided in the next section.



\subsection{Effective diffusion guidance under classifier smoothness assumptions}

In this section, we show that in diffusion model settings where the intermediate distributions are naturally smooth, 
controlling the conditional KL divergence guarantees control of the guidance  MSE, provided that the classifiers used for guidance satisfy standard smoothness conditions that hold for the true conditional label probabilities. 

\vspace{0.5em}
\begin{theorem}
\label{theorem: good-classifier-preserve-guidance}
Suppose we wish to sample from the conditional distribution $P_{\rm data}(\cdot \mid y)$ associated with a label $y$. 
Fix $T > 0$ and a reverse-time grid $0=t_0<t_1<\cdots<t_N = T - \delta$. Let $\tau_k=t_{k+1}-t_k$ and $s_k = T - t_k$ for $k \in \{0\} \cup [N - 1]$. 
Assume the following:
\begin{enumerate}
\item The data distribution $P_{\mathrm{data}}$ is supported on a bounded set $\mathcal{K} \subseteq \mathbb{R}^{d}$ with $R = \sup_{x \in \mathcal{K}} \norm{x}_{2} < \infty$, in the sense that $P_{\mathrm{data}}(x \in \mathcal{K}) = 1$. 
\item For $k \in \{0\} \cup [N - 1]$, the predicted probability $x \mapsto \hat{p}_{s_k}(y\mid x)$ belongs to $C^2(\R^d)$. In addition,
    \begin{align*}
        \|\nabla_x \log \hat{p}_{s_k}(y\mid x) \|_2 \leq 2 \lambda_{s_k} \sigma_{s_k}^{-2} R, \qquad  \big|\mathrm{Tr}(\nabla_x^2 \log \hat{p}_{s_k}(y\mid x)) \big| \leq 2 \lambda_{s_k}^2 \sigma_{s_k}^{-4} R^2. 
    \end{align*}
    \item There exists a universal constant $C > 0$, such that for all $k \in \{0\} \cup [N - 1]$ and sufficiently small $\epsilon$, 
    \begin{enumerate}[label=(\alph*), ref=\theproposition(\alph*)]
            \item $\mathbb{P}_{x \sim p_{s_k}(\cdot \mid y)} \left( \norm{\nabla_{x}\log \hat p_{s_k}(y \mid x)}_{2} \leq \frac{C \sqrt{d + \log(1/\epsilon)}}{\sigma_{t}} \right) \geq 1 - \epsilon / P_{\rm data}(y)$,  
        \item $\mathbb{P}_{x \sim p_{s_k}(\cdot \mid y)} \left(\, \Big|\,\mathrm{Tr}\big( \nabla_x^2 \log \hat p_{ s_k}(y \mid x) \big) \Big| \leq \frac{C(d + \log(1/\epsilon))}{\sigma_{t}^2} \right) \geq 1 - \epsilon / P_{\rm data}(y).$ 
    \end{enumerate}
    \item The target label satisfies $P_{\rm data}(y) > 0$. 
    \item  For all $k \in \{0\} \cup [N - 1]$,
  \[
    \E_{x \sim p_{s_k}}\Big[
      \KL\big(p_{s_k}(\cdot\mid x)\,\Vert\,\hat p_{s_k}(\cdot\mid x)\big)
    \Big]
    \;\le\; \varepsilon_{s_k}^2 , \qquad \varepsilon_{s_k} \in [0, P_{\rm data}(y) / \sqrt{2}].  \]
\end{enumerate}

Then for all $k \in \{0\} \cup [N - 1]$, there exists a numerical constant $C_0 > 0$, such that 
\begin{align*}
    & \E_{x\sim p_{s_k}(\cdot\mid y)}
  \Big[
    \big\|
      \nabla_x \log p_{s_k}(y\mid x)
      -
      \nabla_x \log \hat p_{s_k}(y\mid x)
    \big\|_2^2
  \Big] \\
  & \leq \frac{C_0\,\varepsilon_{s_k}}{\sigma_{s_k}^2 P_{\rm data}(y)} \Big( d + \log \Big( \frac{R^2 + d}{\sigma_{s_k}^2} \Big) + \log \Big( \frac{1}{\varepsilon_{s_k}} \Big) + \log \Big( \frac{1}{P_{\rm data}(y)}  \Big) \Big).
\end{align*}


\end{theorem}

\begin{remark}
    Ignoring logarithmic factors in $(R, d, 1 / \delta, 1 / P_{\rm data}(y))$, and additionally assume $d \gtrsim \log (1 / \varepsilon_{s_k})$, we have
\[
    \E_{x\sim p_{s_k}(\cdot\mid y)}
  \Big[
    \big\|
      \nabla_x \log p_{s_k}(y\mid x)
      -
      \nabla_x \log \hat p_{s_k}(y\mid x)
    \big\|_2^2
  \Big]
  = O \left( \frac{d\varepsilon_{s_k}}{\sigma_{s_k}^2 } \right). 
\]
\end{remark}

\begin{remark}
By Lemma \ref{propn:tweedie-classifier-conditional} and Corollary \ref{corollary:high-prob-bounds-conditional} in the appendix, the second and the third assumptions of the lemma hold if we replace $\hat p_{s_k}$ with the true probability $p_{s_k}$. 
The first assumption holds for many naturally bounded datasets of interest to diffusion models, such as image datasets with bounded pixel values. 
The fourth assumption requires that the target label occurs with positive probability.
The final assumption imposes an upper bound on the classification error as measured by the conditional KL.
\end{remark}

\begin{proof}[Proof of \cref{theorem: good-classifier-preserve-guidance}]

We prove \cref{theorem: good-classifier-preserve-guidance} in Appendix \ref{sec:proof:theorem: good-classifier-preserve-guidance}.    
\end{proof}

In the next theorem, we assume a standard time-decaying step size for the diffusion model, as in \cite{benton2024nearly}, and under this choice derive a bound on the average MSE of the guidance term.

\vspace{1em}

\begin{lemma}
\label{lem:eps_guide_rate}
Fix $T > 0$ and a reverse-time grid $0=t_0<t_1<\cdots<t_N = T - \delta$. Let $\tau_k=t_{k+1}-t_k$ and $s_k = T - t_k$ for $k \in \{0\} \cup [N - 1]$. 
Assume there exists $\kappa>0$ such that $ \tau_k \le \kappa \min\{1,\,s_{k+1}\}$ for $k=0,\dots,N-1.$ 
Define the average guidance term MSE \[ \varepsilon_{\mathrm{guide}}^{2} \;=\; \sum_{k=0}^{N-1}\tau_k\; \mathbb{E}_{x\sim p_{s_k}(\cdot \mid y)}\!\Big[ \big\|\nabla_{x} \log p_{s_k}(y \mid x) - \nabla_{x} \log \hat{p}_{s_k}(y \mid x)\big\|_{2}^{2} \Big]. \] 
Under the assumptions of \Cref{theorem: good-classifier-preserve-guidance}, it holds that 
\begin{align*}
    \varepsilon_{\mathrm{guide}}^{2} \leq c_0 \sum_{k = 0}^{N - 1}\frac{\kappa  \varepsilon_{s_k}}{ P_{\rm data}(y)} \Big( d + \log \Big( \frac{R^2 + d}{\delta} \Big) + \log \Big( \frac{1}{\varepsilon_{s_k}} \Big) + \log \Big( \frac{1}{P_{\rm data}(y)}  \Big) \Big), 
\end{align*}
where $c_0 > 0$ is a positive numerical constant. 
\end{lemma}

\begin{proof}[Proof of Lemma \ref{lem:eps_guide_rate}]
We prove Lemma \ref{lem:eps_guide_rate} in Appendix \ref{sec:proof:lem:eps_guide_rate}. 
\end{proof}

The average guidance MSE $\varepsilon_{\mathrm{guide}}^{2}$ will later be used to control the sampling error in Section~\ref{sec:sampling}, and $\kappa$ can be interpreted as controlling the maximum step size.

Theorem~\ref{thm:kl-not-score} and Theorem~\ref{theorem: good-classifier-preserve-guidance} together provide a complete characterization of classifier guidance. 
On the one hand, in the absence of classifier smoothness, control of the conditional KL divergence does not guarantee control of the guidance term MSE. 
On the other hand, for bounded target distributions, if the classifier satisfies the same smoothness conditions as the ground-truth conditional label probabilities, 
this pathology disappears, and small conditional KL does imply control of the guidance MSE.

\subsection{Rate optimality}

The bound in Theorem~\ref{theorem: good-classifier-preserve-guidance} shows that as the conditional KL divergence vanishes at rate $O(\varepsilon^2)$, the guidance term MSE decays at order $O(\varepsilon)$ (ignoring dependence on other problem parameters). 
A natural question is whether this rate can be improved.


Our next result shows that such an improvement is impossible. 
Specifically, we construct a data distribution and a family of classifiers that satisfy Assumptions~(1)–(5) of Theorem~\ref{theorem: good-classifier-preserve-guidance}, such that the conditional KL divergence scales as $O(\varepsilon^2)$, while the guidance MSE is $\Omega(\varepsilon)$. 
Our construction demonstrates that the $\varepsilon$-dependence in Theorem \ref{theorem: good-classifier-preserve-guidance} is tight.

\vspace{1em}

\begin{theorem}
\label{prop:sharpness}
There exists a data distribution $P_{\mathrm{data}}$, 
such that for any sufficiently small $\varepsilon$, there exists a sequence of probabilistic classifiers $x \mapsto \hat p_{t}(\cdot \mid x)$ for $t \in \{s_k: k \in \{0\} \cup [N - 1]\}$ that satisfy Assumptions~\textnormal{(1)--(5)} of Theorem~\textnormal{\ref{theorem: good-classifier-preserve-guidance}}, such that 
\begin{enumerate}
\item[(i)] The classifiers achieve small conditional KL:
\[
  \sup_{t \in \{s_k: k \in \{0\} \cup [N - 1]\}}\E_{X_t \sim p_t(\cdot)}\Big[
    \KL\big(p_t(\cdot\mid X_t)\,\Vert\,\hat p_{t}(\cdot\mid X_t)\big)
  \Big]
  \;\le\; C_1\,\varepsilon^2
\]
for a constant $C_1 > 0$ that is independent of $\varepsilon$.

\item[(ii)] Guidance term MSEs are bounded below: 
\[
  \inf_{t \in \{s_k: k \in \{0\} \cup [N - 1]\}}\E_{X_{t}\sim p_{t}(\cdot\mid y)}
  \Big[
    \big\|
      \nabla_x \log p_{t}(y\mid X_{t})
      -
      \nabla_x \log \hat p_{t}(y\mid X_{t})
    \big\|_2^2
  \Big]
  \;\ge\;
  C_2\,\varepsilon
\]
for a constant $C_2>0$ that is independent of $\varepsilon$.
\end{enumerate}

\end{theorem}

\begin{proof}[Proof of Theorem \ref{prop:sharpness}]
    We prove Theorem \ref{prop:sharpness} in Appendix \ref{sec:proof:prop:sharpness}. We also present our construction therein. 
\end{proof}

\subsection{Implications for conditional sampling error}
\label{sec:sampling}

In practice, diffusion models are implemented by discretizing continuous-time processes and using learned score functions. 
Consequently, the sampling error depends not only on the guidance error, but also on the discretization error and the score estimation error. 
For completeness, we analyze time-discretized guided diffusion in this section. 
Our proof largely follows that of \cite{benton2024nearly}.

We consider DDPM-style samplers equipped with an exponential integrator.
Specifically, we fix a time grid $0=t_0<t_1<\cdots<t_N=T-\delta$, and define $\tau_k=t_{k+1}-t_k$ and $s_k=T-t_k$ for $k \in \{0\} \cup [N - 1]$.
For each step $k = 1, 2, \cdots, N$, we train a classifier $\hat p_{s_k}$ and a score estimate $s_{\theta}(\cdot, s_k)$. 
Given these components, a DDPM-type sampler starts from $\hat X_0^{(y)} \sim N(0, I_d)$ and, at the $k$-th update step for $k = 1, 2, \cdots, N$, set
\begin{align}
\label{eq:guided-DDPM}
    \hat X_{t_{k + 1}}^{(y)} = e^{\tau_k}\hat X_{t_k}^{(y)} + 2(e^{\tau_k} - 1) \big[ s_{\theta}(\hat X_{t_k}^{(y)}, s_k) + \gamma \,\nabla\log \hat p_{s_k}(y \mid X_{t_k}^{(y)}) \big] + \sqrt{e^{2 \tau_k} - 1}\, Z_k, 
\end{align}
where $Z_k \sim_{i.i.d.} N(0, I_d)$ are independent of the previous estimates, $\gamma > 0$ is a hyperparameter that controls the guidance strength, and $\hat X_{t_{0}}^{(y)} \sim N(0, I_n)$. 
For $k = 0, 1, \cdots, N$, we denote by $\hat p_{s_k}^{(y)}$ the marginal distribution of $\hat X_{t_k}^{(y)}$, with $\hat p_{ \delta}^{(y)}$ denotes the output distribution of sampler \eqref{eq:guided-DDPM}. 
In this work, we set $\gamma = 1$ so that process \eqref{eq:guided-DDPM} is designed to approximate the conditional distribution $P_{\rm data}(\cdot \mid y)$. 
Notably, in practice, practitioners typically set $\gamma > 1$ to enhance sample fidelity \cite{dhariwal2021diffusion,ho2021classifierfree}.
Such hyperparameter choice introduces bias into the generative process, making it unclear which distribution the guided diffusion model is attempting to sample from.
We do not consider this regime here, as our goal is to bound the sampling error of the output distribution with respect to a well-defined target. 
Extending the analysis to $\gamma > 1$ remains an interesting future direction.

To state our result, we impose the following assumption on the score estimation error. 

\vspace{1em}

\begin{assumption}
\label{assumption:score}
    The score estimate $s_{\theta}$ satisfies
    \[
        \sum_{k=0}^{N-1} \tau_k \mathbb{E}_{X \sim p_{T - t_{k}}} [ \norm{\nabla_{x} \log p_{T - t_{k}}(X) -s_{\theta}(X, T - t_{k})}_{2}^{2}] \leq \varepsilon_{\mathrm{score}}^2. 
    \]
\end{assumption}

Under Assumption~\ref{assumption:score} and the assumptions of \cref{theorem: good-classifier-preserve-guidance}, we establish the following upper bound on the sampling error.

\vspace{1em}

\begin{theorem}
\label{thm:discretization}
Assume there exists $\kappa>0$ such that for each $k=0,\dots,N-1$, $\tau_k \le \kappa \min\{1,\,T-t_{k+1}\}=\kappa\min\{1,\,s_{k+1}\}$. 
We denote by $p_t^y$ the conditional distribution $p_t(\cdot \mid y)$. 
Then, under Assumption \ref{assumption:score} and the assumptions of \cref{theorem: good-classifier-preserve-guidance}, it holds that
\begin{align}
\label{eq:thm_disc_main}
\KL\big(p_{\delta}^{y}\,\|\,\hat p_{\delta}^{y}\big)
\;\leq\; C_{\rm s} \Big( 
\varepsilon_{\mathrm{score}}^2+\varepsilon_{\mathrm{guide}}^{2}
+\kappa (d + R^2)T+\kappa^2 (d + R^2)N
+(d + R^2)e^{-2T} \Big), 
\end{align}
where we recall $\varepsilon_{\mathrm{guide}}^{2}$ is from  \Cref{lem:eps_guide_rate}, and $C_{\rm s}$ is a positive numerical constant. 
\end{theorem}
\begin{proof}[Proof of Theorem \ref{thm:discretization}]
    We prove Theorem \ref{thm:discretization} in Appendix \ref{sec:proof:thm:discretization}. 
\end{proof}

\begin{remark}
    Note that the upper bound on $\varepsilon_{\mathrm{guide}}^{2}$ in Theorem \ref{theorem: good-classifier-preserve-guidance} scales linearly with the dimension $d$, and this dependence cannot be improved for the KL divergence between $d$-dimensional distributions (e.g., for product distributions). 
\end{remark}

\section{Experimental results}
\label{sec:experiments}
In this section, we present numerical experiments that support our theoretical findings.

\subsection{Counterexample: controlling conditional KL does not guarantee effective guidance}

We numerically validate Theorem~\ref{thm:kl-not-score} by constructing a counterexample in a binary classification setting.
We use our construction to demonstrate that a classifier can achieve vanishingly small conditional KL divergence, while exhibiting substantial guidance term misalignment, thereby demonstrating that controlling conditional KL does not necessarily guarantee effective guidance.

\paragraph{Experimental setup.}

We consider a binary classification problem with label space $\mathcal{Y}=\{0,1\}$ and a two-dimensional input space.
We denote by $p$ the target distribution, and assume the guidance is towards the conditional distribution $p(\cdot \mid y = 1)$.
The covariates $x \in \mathbb{R}^2$ are assumed to be sampled i.i.d.\ from the standard Gaussian distribution $p(x)=\mathcal{N}(0,I_2)$.
Conditioned on $x$, the label distribution is specified by the following conditional probability:
\[
p(y=1 \mid x) = 0.5 + 0.3 \tanh(x_1),
\]
which yields probabilities in the range $[0.2, 0.8]$. We set $\gamma = 0.3$ and define the set 
\[
A_\gamma = \{x : p(y=1 \mid x) < 1-\gamma\} = \{x : p(y=1 \mid x) < 0.7\}.
\]
Following the construction in the proof of Theorem~\ref{thm:kl-not-score} (Appendix~\ref{sec:proof:thm:kl-not-score}), we construct the following probabilistic classifier by perturbing the true conditional label probabilities, with the perturbation indexed by $n$:
\[
\hat{p}_n(y=1 \mid x) = \begin{cases}
p(y=1 \mid x) \cdot [1 + \delta_n \sin(n x_1)], & \text{if } x \in A_\gamma, \\
p(y=1 \mid x), & \text{if } x \notin A_\gamma,
\end{cases}
\]
with $\hat{p}_n(y=0 \mid x) = 1 - \hat{p}_n(y=1 \mid x)$ to ensure normalization. 
The parameter $n \in \{10, 30, \ldots, 1000\}$ determines the oscillation frequency of the perturbation, and the amplitude $\delta_n$ also scales with $n$. 
For the Monte Carlo estimation of the conditional KL divergence and the guidance term MSE, we utilize $N = 10{,}000$ independent sample points.
We examine the two regimes discussed in the proof of Theorem~\ref{thm:kl-not-score} (Appendix~\ref{sec:proof:thm:kl-not-score}): 
\begin{itemize}
    \item[--] In Regime 1 we set $\delta_n = 1/n$, and the theorem predicts that the conditional KL decays at rate $O(1 / n)$, while the guidance MSE is $\Omega(1)$, remaining bounded below by a positive constant. 
    \item[--] In Regime 2 we set $\delta_n = 1 / \sqrt{n}$. 
    In this case, the theorem shows that the conditional KL decays at a slower rate $O(1 / \sqrt{n})$, and the guidance MSE diverges to infinity at rate $\Omega(n)$. 
\end{itemize}

\paragraph{Simulation outcomes.}

Figure~\ref{fig:theorem31-convergence} presents the simulation outcomes for both regimes. 
We measure two quantities: the conditional KL divergence
\[
\mathbb{E}_{x \sim p}\left[\mathrm{KL}(p(\cdot \mid x) \,\|\, \hat{p}_n(\cdot \mid x))\right],
\]
and the guidance term MSE
\[
\mathbb{E}_{x \sim p(\cdot \mid y = 1)}\left[\|\nabla_x \log p(y=1 \mid x) - \nabla_x \log \hat{p}_n(y=1 \mid x)\|_2^2\right].
\]

We discuss the results from these two regimes separately in the following.

\begin{figure}[h!]
\centering
\includegraphics[width=0.95\textwidth]{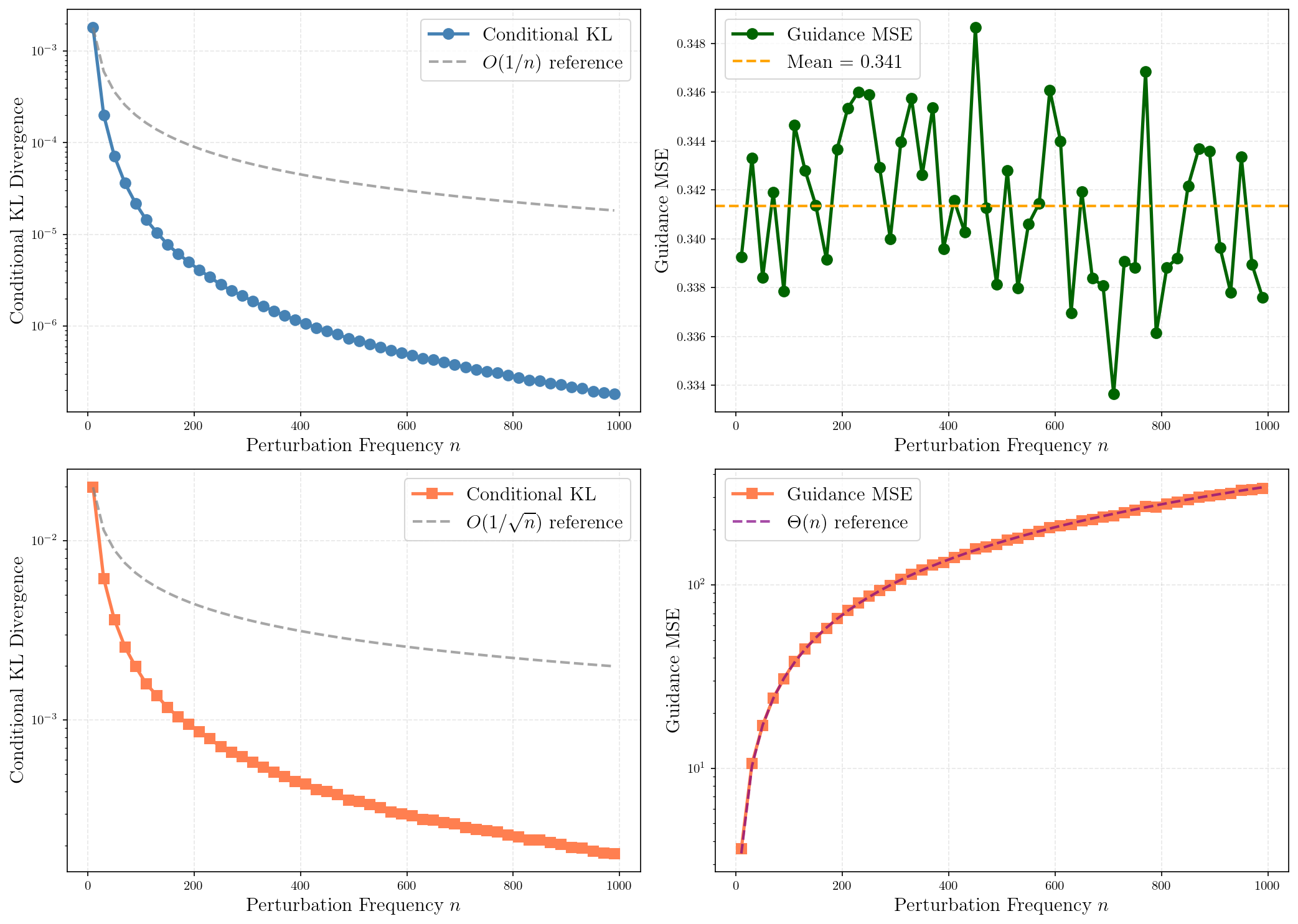}
\caption{{Numerical validation of Theorem~\ref{thm:kl-not-score}. 
The first and second rows correspond to Regimes 1 and 2, respectively. 
The first column displays the conditional KL divergence, and the second column shows the guidance MSE. } }
\label{fig:theorem31-convergence}
\end{figure}

\begin{itemize}
    \item \textbf{Regime 1} ($\delta_n = 1/n$): 
    As shown in the upper row of Figure~\ref{fig:theorem31-convergence}, the conditional KL (top-left) decreases from approximately $2 \times 10^{-3}$ at $n=10$ to approximately $2 \times 10^{-7}$ at $n=1000$, decaying faster than the $O(1 / n)$ upper bound. 
    On the other hand, the guidance MSE (top-right) remains nearly constant at around $0.341$ across all frequencies, confirming the $\Omega(1)$ behavior predicted by the theorem.
    \item \textbf{Regime 2} ($\delta_n = 1/\sqrt{n}$): As shown in the bottom row of Figure~\ref{fig:theorem31-convergence}, the conditional KL divergence (bottom-left) decreases from approximately $2 \times 10^{-2}$ at $n=10$ to approximately $2 \times 10^{-4}$ at $n=1000$, also decaying to zero faster than the upper bound $O(1 / \sqrt{n})$. In contrast, the guidance error (bottom-right) increases dramatically from approximately $4$ at $n=10$ to approximately $330$ at $n=1000$, exhibiting clear $\Omega(n)$ growth as predicted by the theorem.
\end{itemize}

\subsection{Guidance error control with classifier smoothness}
\label{sec:regularity}


We now validate Theorem~\ref{theorem: good-classifier-preserve-guidance} in the setting of a Gaussian mixture model (GMM).

\paragraph{Experimental setup.}
We consider a binary label space $\mathcal{Y} = \{0,1\}$ and a sample space $\mathbb{R}^2$.
We denote the target distribution by $p$.
We set $p(y = 0) = p(y = 1) = 0.5$, $\mu_0 = [-2, 0]$, $\mu_1 = [2, 0]$, and assume the following GMM:
\begin{align*}
    p(x \mid y = 0) = N(\mu_{0}, 0.5I_2), \qquad p(x \mid y = 1) = N(\mu_1, 0.5 I_2).  
\end{align*}
Consider the forward noising process of the diffusion model in \eqref{eq:ou-forward} initialized at $p$, and denote by $p_t$ the distribution at time $t$. One can verify that $p_t(y=0)=p_t(y=1)=0.5$, and
\begin{align*}
    p_t(x \mid y = 0) = N(\lambda_t\mu_{0},  (0.5\lambda_t^2 + \sigma_t^2)I_2), \qquad p(x \mid y = 1) = N(\lambda_t\mu_1,  (0.5\lambda_t^2 + \sigma_t^2)I_2).  
\end{align*}
The true conditional probabilities $p_t(y \mid x)$ can be computed analytically via Bayes' rule.

For each diffusion time $t \in \{0.01, 0.1, 1.0, 3.0\}$, we generate independent training samples ${(x_i, y_i)}_{i=1}^N$ from $p$ and fit a Gaussian mixture model $\hat p_t$ using the EM algorithm, which leads to smooth classifiers.
We consider sample sizes $N \in \{50, 100, 200, 500, 1000,$ $ 2000, 3000, 5000\}$.
For each $(t, N)$ setting, we use Monte Carlo simulation to calculate the conditional KL divergence and the guidance MSE.

\paragraph{Simulation outcomes.}

\begin{figure}[htbp!]
    \centering
    \includegraphics[width=\textwidth]{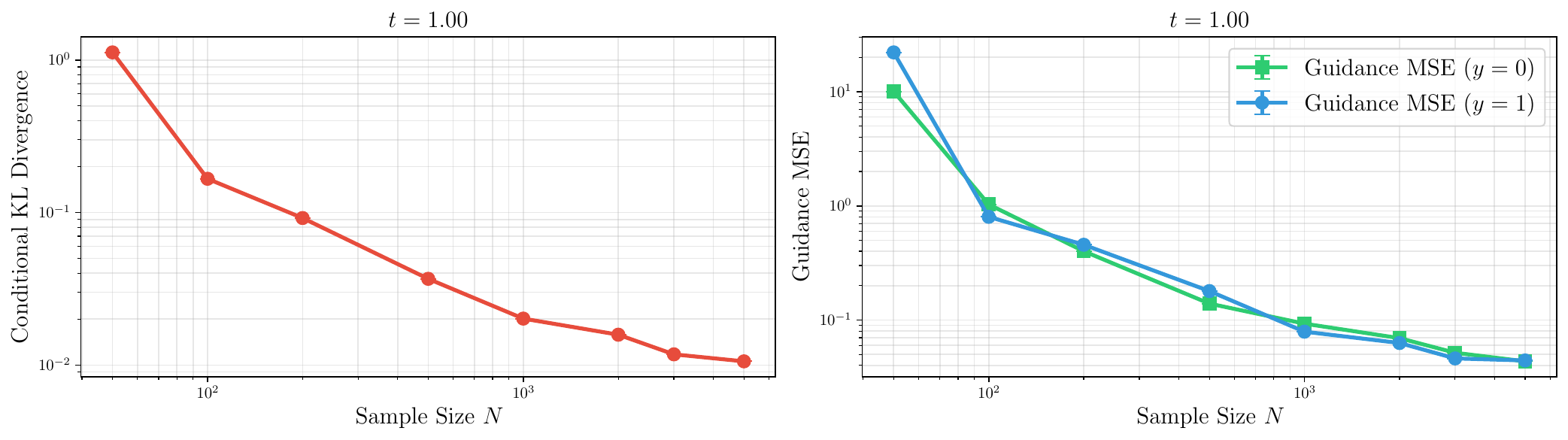}
    \caption{Numerical validation of Theorem~\ref{theorem: good-classifier-preserve-guidance} under a GMM. The left panel displays the conditional KL divergence, and the right panel shows the guidance MSE. }
    \label{fig:gmm}
\end{figure}

Figure~\ref{fig:gmm} plots the conditional KL divergence and the guidance MSE as functions of the sample size $N$ for $t=1$.
Additional numerical results for other values of $t$ are provided in Appendix~\ref{appendix:simulation}.
From the figure, we observe that both the conditional KL divergence and the guidance MSE decrease as $N$ increases, with similar decay trends, indicating that smaller classifier conditional KL indeed leads to smaller guidance MSE under classifier smoothness conditions.
These findings empirically validate Theorem~\ref{theorem: good-classifier-preserve-guidance}.

\section{Conclusion}

In this work, we identify tight conditions under which classification error (measured by conditional KL divergence) controls the effectiveness of diffusion guidance.
Our negative result (Theorem~\ref{thm:kl-not-score}) shows that low classification error alone is insufficient, by exhibiting a concrete example in which the classifier attains small conditional KL while the resulting guidance terms are severely misaligned.
Complementarily, our positive result (Theorem~\ref{theorem: good-classifier-preserve-guidance}) establishes that, under additional smoothness assumptions on the classifier, a probabilistic classifier with conditional KL $\varepsilon^2$ induces a guidance MSE of order $\tilde O(d\varepsilon)$.
We prove such dependency on $\varepsilon$ is optimal (Theorem \ref{prop:sharpness}), and further leverage our result to derive sampling error bounds for classifier-guided diffusion models (Theorem \ref{thm:discretization}).

Several directions merit further investigation, including extending our analysis to non-compactly supported distributions, exploring alternative classifier training procedures, and studying the regime $\gamma>1$, which introduces bias and obscures the target distribution.
By rigorously connecting classifier training error to diffusion guidance quality, this work provides both theoretical foundations and practical insights for classifier-guided diffusion models.

\section*{Acknowledgments}

We thank Edgar Dobriban, Jiaoyang Huang, and Xianli Zeng for inspiring conversations. 

\bibliographystyle{alpha}
\bibliography{refs}

\newpage
\appendix

\section{Proofs of the main theoretical results}

\subsection{Proof of \cref{thm:kl-not-score}}
\label{sec:proof:thm:kl-not-score}

For $\gamma \in (0, 1)$, define $A_{\gamma} = \{x: 0 < p(y \mid x) < 1 - \gamma\}$. By assumption, there exists $\gamma \in (0, 1)$, such that $p(A_{\gamma} \mid y) > 0$. 
Denote by $x_1$ the first coordinate of $x$. 
Let  $\{\delta_n\}_{n = 1}^{\infty}$ be a sequence of amplitudes satisfying $\delta_n \downarrow 0$ as $n \to \infty$. 
For $x \in A_{\gamma}$, define the following perturbation: 
\begin{align*}
	& \hat p_n(y \mid x) = p(y \mid x) \, \big ( 1 + \delta_n \sin(nx_1)  \big), \\
	& \hat p_n(y' \mid x) = p(y' \mid x) \cdot  \frac{1 - p(y \mid x) - \delta_n \sin (n x_1) \, p(y \mid x)}{1 - p(y \mid x)}, \qquad y' \in \mathcal{Y} \backslash \{y\}. 
\end{align*}
For $x \notin A_{\gamma}$, we let $\hat p_n(\cdot \mid x) = p(\cdot \mid x)$. 
For any $\delta_n \in [0, \gamma / (2 - 2\gamma)\,)$, the conditional density $\hat p_n(\cdot \mid x)$ is well defined and adds up to one almost surely with respect to $x \sim p$. 

Note that for $x \notin A_{\gamma}$, by definition $\KL (p(\cdot \mid x) \parallel \hat p_n(\cdot \mid x)) = 0$. 
For $x \in A_{\gamma}$, 
\begin{align*}
	\KL (p(\cdot \mid x) \parallel \hat p_n(\cdot \mid x)) = & - p(y \mid x) \log (1 + \delta_n \sin (n x_1)) -\sum_{y' \in \mathcal{Y} \backslash \{y\}} p(y' \mid x) \log \frac{1 - p(y \mid x) - \delta_n \sin (n x_1) \, p(y \mid x)}{1 - p(y \mid x)} \\
	\leq & \, \delta_n + \frac{\delta_n p(y \mid x)}{1 - p(y \mid x) - \delta_n \sin (n x_1) \, p(y \mid x)} \leq 2\delta_n \gamma^{-1}. 
\end{align*}
where we have applied the standard bounds $\log(1 + x) \leq x$ for $x \geq 0$ and $-\log(1-x) \leq \frac{x}{1-x}$ for $0 \leq x < 1$. Therefore, as $n \to \infty$, almost surely over $x \sim p$
\begin{align*}
	 \KL (p(\cdot \mid x) \parallel \hat p_n(\cdot \mid x)) \leq 2\delta_n \gamma^{-1} \to 0. 
\end{align*}
Next, we show that the guidance vector corresponding to $\hat p_n$ is misaligned with that of $p$. 
Note that for $x \in A_{\gamma}$, 
\begin{align*}
	\nabla_x \log \hat p_n(y \mid x) = & \nabla_x \log p(y \mid x) + \nabla_x \log (1 + \delta_n \sin (n x_1)) \\
	= &  \nabla_x \log p(y \mid x) + \frac{\delta_n n \cos (n x_1)}{1 + \delta_n \sin (n x_1)} e_1, 
\end{align*}
where $e_1 \in \R^d$ denotes the first standard basis vector. Therefore, using the fact that $\cos^2 x = (1 + \cos (2x)) / 2$ and $\delta_n \in [0, \gamma / (2 - 2\gamma)\,)$, we see that
\begin{align*}
	& \E_{x \sim p(\cdot \mid y)} \big[ \| \nabla_x \log p(y \mid x) - \nabla_x \log \hat p_n(y \mid x) \|_2^2 \big]  = \int_{A_{\gamma}} \frac{\delta_n^2 n^2 \cos (n x_1)^2}{(1 + \delta_n \sin (n x_1))^2} p(x \mid y)\, \dd x \\
	& \geq \frac{(1 - \gamma)^2\delta_n^2 n^2}{2} \, \Big[ p(A_{\gamma} \mid y) + \int_{\R} \cos (2n x_1) \Big( \int_{\R^{d - 1}} \mathbbm{1}_{A_{\gamma}}(x)  p(x_1, x_{2:d}\mid y) \dd x_{2:d} \Big) \dd x_1 \Big]. 
\end{align*}
Note that $x_1 \mapsto \int_{\R^{d - 1}} \mathbbm{1}_{A_{\gamma}}(x)  p(x_1, x_{2:d} \mid y) \dd x_{2:d}$ as a function over $\R$ is $L^1$ integrable. 
Therefore, by the Riemann-Lebesgue lemma, it holds that 
\begin{align*}
	\int_{\R} \cos (2n x_1) \Big( \int_{\R^{d - 1}} \mathbbm{1}_{A_{\gamma}}(x)  p(x_1, x_{2:d} \mid y) \dd x_{2:d} \Big) \dd x_1 \to 0 \qquad \mbox{as }n \to \infty, 
\end{align*}
hence for a sufficiently large $n$, one can take $|I_{n}| \leq \frac{1}{2}p(A_{\gamma} \mid y)$ which yields
\begin{align*}
	\E_{x \sim p(\cdot \mid y)} \big[ \| \nabla_x \log p(y \mid x) - \nabla_x \log \hat p_n(y \mid x) \|_2^2 \big] \geq \frac{(1 - \gamma)^2p(A_{\gamma}\mid y)\delta_n^2 n^2}{4}. 
\end{align*}
It then suffices to choose a sequence $\{\delta_n\}_{n = 1}^{\infty}$ such that $\delta_n n \geq c > 0$ while $\delta_n \downarrow 0$ as $n \to \infty$. 
We present below two illustrative scaling regimes. 
\begin{table}[ht]
\label{table:two-regimes}
\centering 
\begin{tabular}{lllll}
\multicolumn{1}{l|}{Choice of $\delta_n$}      & \multicolumn{1}{l|}{Cross entropy loss} & Guidance vector error &  &  \\ \cline{1-3}
\multicolumn{1}{l|}{$\delta_n = 1 / n$}        & \multicolumn{1}{l|}{$O(1 / n)$}       & $\Theta(1)$           &  &  \\ \cline{1-3}
\multicolumn{1}{l|}{$\delta_n = 1 / \sqrt{n}$} & \multicolumn{1}{l|}{$O(1 / \sqrt{n})$}         & $\Theta(n)$           &  &  \\
                                               &                                         &                       &  & 
\end{tabular}
\caption{Two illustrative scaling regimes.}
\end{table}

In both regimes the cross entropy loss vanishes, yet the guidance error does not converge to zero.

\subsection{Proof of \cref{theorem: good-classifier-preserve-guidance}}
\label{sec:proof:theorem: good-classifier-preserve-guidance}

For $t \in \{s_0, s_1, \cdots, s_{N - 1}\}$, let $u_t(x) = \log p_{t}(y \mid x) - \log \hat{p}_{t}(y \mid x)$. 
We aim to bound $\mathbb{E}_{x \sim p_t(\cdot \mid y)} \left[ \norm{\nabla_{x}u_t(x)}_{2}^{2} \right]$.
Note that
\begin{align*}
    \nabla_{x} \cdot (u_t(x)p_t(x \mid y)\nabla_{x}u_t(x)) = p_t(x \mid y) \norm{\nabla_{x}u_t(x)}_{2}^{2} + u_t(x) \langle \nabla_{x}p_t(x \mid y), \nabla_{x}u_t(x) \rangle + p_t(x \mid y) u_t(x) \mathrm{Tr}(\nabla_{x}^{2}u_t(x)).
\end{align*}
Consider a closed ball $\mathcal{B}_{\rho}$ with radius $\rho > 0$.
Rearranging and using the fact that $p_t(x \mid y) \nabla_{x} \log p_t(x \mid y) = \nabla_{x}p_t(x \mid y)$, we find that
\begin{align*}
        \int_{\mathcal{B}_{\rho}} \norm{\nabla_{x}u_t(x)}_{2}^{2} p_{t}(x \mid y) \dd x &= \int_{\mathcal{B}_{\rho}} \nabla_{x} \cdot (u_t(x)p_{t}(x \mid y)\nabla_{x}u_t(x)) \dd x \\
        &- \int_{\mathcal{B}_{\rho}} p_{t}(x \mid y) u_t(x) [ \langle \nabla_{x} \log p_{t}(x \mid y), \nabla_{x}u_t(x) \rangle +\mathrm{Tr}(\nabla_{x}^{2}u_t(x))] \dd x. 
    \end{align*}
    Letting $\rho \to \infty$, by Lemmas \ref{prop:ibp-boundary-disappearance} and \ref{propn:ibp-mct-dct}, we find that
    \begin{align*}
        \mathbb{E}_{x \sim p_t(\cdot \mid y)} \left[ \norm{\nabla_{x}u_t(x)}_{2}^{2} \right] = - \underbrace{\int_{\mathbb{R}^{d}} p_{t}(x \mid y) u_t(x) \langle \nabla_{x} \log p_{t}(x \mid y), \nabla_{x}u_t(x) \rangle \dd x}_{\mathrm{I}} - \underbrace{\int_{\mathbb{R}^{d}} p_{t}(x \mid y) u_t(x) \mathrm{Tr}(\nabla_{x}^{2}u_t(x)) \dd x}_{\mathrm{II}}. 
    \end{align*}
We then upper bound terms $\mathrm{I}$ and $\mathrm{II}$, separately.
We first bound term $\mathrm{I}$. To this end, we define the set
\begin{align*}
     \mathcal{G}_t(\epsilon)
        = \Bigl\{
        \|\nabla_x \log p_t(x \mid y)\|_2 \le r_t(\epsilon), \,\, \|\nabla_x \log p_t(y\mid x)\|_2 \le r_t(\epsilon), \,\, \|\nabla_x \log \hat p_{t}(y\mid x)\|_2 \le r_t(\epsilon)
        \Bigr\}, 
\end{align*}
where $r_{t}(\epsilon) = \frac{C \sqrt{d + \log(1/\epsilon)}}{\sigma_{t}}$ is taken to be the maximum of the upper bounds from Corollaries \ref{lemma:gradient-high-prob-bound}, \ref{corollary:gradient-high-prob}, and the theorem condition 3(a). 
We then have $p_t(\mathcal{G}_t(\epsilon)^c \mid y) \leq 3 \epsilon / P_{\rm data}(y)$.
Note that 
\begin{align*}
        \mathbb{E}_{x \sim p_t(\cdot \mid y)} \left[ |u_t(x)| \norm{\nabla_{x} \log p_{t}(x \mid y)}_{2} \norm{\nabla_{x} u_t(x)}_{2} \right] &= \mathbb{E}_{x \sim p_t(\cdot \mid y)} [ |u_t(x)| \norm{\nabla_{x} \log p_{t}(x \mid y)}_{2} \norm{\nabla_{x} u_t(x)}_{2}\mathbbm{1}_{\mathcal{G}_{t}(\epsilon)}] \\
        &+ \mathbb{E}_{x \sim p_t(\cdot \mid y)} [ |u_t(x)| \norm{\nabla_{x} \log p_{t}(x \mid y)}_{2} \norm{\nabla_{x} u_t(x)}_{2}\mathbbm{1}_{\mathcal{G}_{t}(\epsilon)^{c}}]. 
\end{align*}
On $\mathcal{G}_{t}(\epsilon)$, by Lemma \ref{propn:bounding-expectation-of-u} we have
    \begin{align}
    \label{eq:I-A}
    \begin{split}
        \mathbb{E}_{x \sim p_t(\cdot \mid y)} [ |u_t(x)| \norm{\nabla_{x} \log p_{t}(x \mid y)}_{2} \norm{\nabla_{x} u_t(x)}_{2}\mathbbm{1}_{\mathcal{G}_{t}(\epsilon)}] \leq  \,  \frac{2C\varepsilon_t r_t(\epsilon)^2}{P_{\rm data}(y)} \leq  \,\frac{C'\varepsilon_t (d + \log(1/\epsilon))}{P_{\rm data}(y)\sigma_t^2}, 
    \end{split}
    \end{align}
where $C, C' > 0$ are numerical constants. 
On $\mathcal{G}_{t}(\epsilon)^{c}$, by Lemma \ref{propn:tweedie-classifier-conditional} we have 
    \begin{align}
    \label{eq:I-B}
        \mathbb{E}_{x \sim p_t(\cdot \mid y)} [ |u_t(x)| \norm{\nabla_{x} \log p_{t}(x \mid y)}_{2} \norm{\nabla_{x} u_t(x)}_{2}\mathbbm{1}_{\mathcal{G}_{t}(\epsilon)^{c}}] \leq \frac{4\lambda_t R}{\sigma_{t}^{2}} \mathbb{E}_{x \sim p_t(\cdot \mid y)} 
        [ |u_t(x)| \norm{\nabla_{x} \log p_{t}(x \mid y)}_{2} \mathbbm{1}_{\mathcal{G}_{t}(\epsilon)^{c}}].
    \end{align}
Since $P_{\rm data}$ is supported on the ball centered at the origin with radius $R$,
then there exists $b \leq C''( R + \sqrt{d})$ for some positive numerical constant $C''$, such that $p_t(\mathcal{B}_b \mid y) \geq 1 / 2$. By Lemma \ref{propn:bounding-expectation-of-u}, we know that $\E_{x \sim p_t(\cdot \mid y)} [|u_t(x)|] \leq C\varepsilon_t / P_{\rm data}(y)$. 
Therefore, 
there exists $x_{\ast} \in \mathcal{B}_b$, such that $|u_t(x_{\ast})| \leq 2C\varepsilon_t  / P_{\rm data}(y)$. 
Using the fundamental theorem of calculus and Lemma \ref{propn:tweedie-classifier-conditional}, we see that 
    \begin{align}
    \label{eq:distance}
        |u_t(x)| \leq |u_t(x_{\ast})| + \int_{0}^{1} |\langle \nabla u_t(rx + (1 - r) x_{\ast}), x - x_{\ast} \rangle| \dd r \leq |u_t(x_{\ast})| + 4\lambda_{t}\sigma_{t}^{-2}R \norm{x - x_{\ast}}_{2}. 
    \end{align}
By \cref{eq:lemma-b1}, we have
 \begin{align*}
     \|\nabla_x \log p_t(x \mid y) \|_2 = \| \sigma_t^{-2} (\lambda_t m_t(x, y) - x) \|_2 \leq \sigma_t^{-2} \lambda_t R + \sigma_t^{-2} \|x\|_2. 
 \end{align*}
Therefore, 
\begin{align}
\label{eq:I-C}
    & \mathbb{E}_{x \sim p_t(\cdot \mid y)} 
        [ |u_t(x)| \norm{\nabla_{x} \log p_{t}(x \mid y)}_{2} \mathbbm{1}_{\mathcal{G}_{t}(\epsilon)^{c}}] \nonumber\\
        \lesssim & \mathbb{E}_{x \sim p_t(\cdot \mid y)} 
         \big[ (\lambda_t^2 \sigma_t^{-4} R^2 \|x - x_{\ast}\|_2 + \lambda_t \sigma_t^{-4} R \|x\|_2 \|x - x_{\ast}\|_2 +  \sigma_t^{-2} \lambda_t R \varepsilon_t / P_{\rm data}(y)  + \sigma_t^{-2} \|x\|_2 \varepsilon_t / P_{\rm data}(y)) \mathbbm{1}_{\mathcal{G}_{t}(\epsilon)^{c}} \big] \nonumber \\
         \lesssim & \mathbb{E}_{x \sim p_t(\cdot \mid y)} 
         \Big[ \lambda_t^4 \sigma_t^{-8} R^4 \|x - x_{\ast}\|_2^2 + \lambda_t^2 \sigma_t^{-8} R^2 \|x\|_2^2 \|x - x_{\ast}\|_2^2 + \frac{\varepsilon_t^2 \sigma_t^{-4} \lambda_t^2 R^2}{P_{\rm data}(y)^2} + \frac{\varepsilon_t^2 \sigma_t^{-4} \|x\|_2^2}{P_{\rm data}(y)^2}\Big]^{1/2} p_t(\mathcal{G}_t(\epsilon)^c \mid y)^{1/2} \nonumber \\
         \lesssim & \frac{R(R^2 + d) \epsilon^{1/2}}{\sigma_t^4 P_{\rm data}(y)^{1 / 2}} + \frac{(R^2 + d)^{1/2} \varepsilon_t \epsilon^{1/2}}{\sigma_t^{2}P_{\rm data}(y)^{3 / 2}}, 
\end{align}
where ``$\lesssim$'' hides numerical constants. 
Combining \cref{eq:I-A,eq:I-B,eq:I-C}, we get
    \begin{align}
    \label{eq:term-I}
        | \mathrm{I} | \lesssim \frac{\varepsilon_t (d + \log(1/\epsilon))}{P_{\rm data}(y)\sigma_t^2} + \frac{R^2(R^2 + d) \epsilon^{1/2}}{\sigma_t^6 P_{\rm data}(y)^{1 / 2}} + \frac{R(R^2 + d)^{1/2} \varepsilon_t  \epsilon^{1/2}}{\sigma_t^{4}P_{\rm data}(y)^{3 / 2}}. 
    \end{align}
We next bound term $\mathrm{II}$. 
Similarly, we define the set
\begin{align*}
     \mathcal{H}_t(\epsilon)
        = \Bigl\{\,
        \big| \mathrm{Tr}(\nabla_x^{2} \log p_t(x\mid y)) \big| \le R_t(\epsilon),\, \;
        \big| \mathrm{Tr}(\nabla_x^{2} \log p_t(y\mid x)) \big| \le R_t(\epsilon),\,\;
        \big| \mathrm{Tr}(\nabla_x^{2} \log \hat p_{t}(y\mid x)) \big| \le R_t(\epsilon)
        \Bigr\}, 
\end{align*}
where $R_t(\epsilon) = \frac{C' ({d + \log(1/\epsilon)})}{\sigma_{t}^2}$ is taken to be the maximum of the upper bound from Lemma \ref{lemma:hessian-high-prob-bound}, Corollary \ref{corollary:hessian-high-prob}, and the theorem condition 3(b).
In this case, $p_t(\mathcal{H}_{t}(\epsilon)^c \mid y) \leq 3 \epsilon / P_{\rm data}(y)$.
On $\mathcal{H}_{t}(\epsilon)$, by  
Corollary \ref{corollary:hessian-high-prob} and Lemma \ref{propn:bounding-expectation-of-u}, 
    \begin{align*}
        \mathbb{E}_{x \sim p_t(\cdot \mid y)} [ |u_t(x)| \cdot |\mathrm{Tr}(\nabla_{x}^{2}u_t(x))| \mathbbm{1}_{\mathcal{H}_{t}(\epsilon)}] \lesssim \frac{\varepsilon_{t} (d + \log(1/\epsilon))}{\sigma_{t}^{2}P_{\rm data}(y)}.
    \end{align*}
On $\mathcal{H}_{t}(\epsilon)^{c}$, by Lemma \ref{propn:tweedie-classifier-conditional} and \cref{eq:distance} we have 
\begin{align*}
    \mathbb{E}_{x \sim p_t(\cdot \mid y)} [ |u_t(x)| \cdot |\mathrm{Tr}(\nabla_{x}^{2}u_t(x))| \mathbbm{1}_{\mathcal{H}_{t}(\epsilon)^c}] \lesssim & \frac{\lambda_t^2 R^2}{\sigma_t^4} \mathbb{E}_{x \sim p_t(\cdot \mid y)}\big[ (|u_t(x_{\ast})| + 4\lambda_{t}\sigma_{t}^{-2}R \norm{x - x_{\ast}}_{2})^2 \big]^{1/2} \cdot \frac{\epsilon^{1/2}}{P_{\rm data}(y)^{1/2}} \\
    \lesssim & \frac{ R^2 \epsilon^{1/2} \varepsilon_t }{\sigma_t^4 P_{\rm data}(y)^{3 / 2}} + \frac{ R^2(R^2 + d) \epsilon^{1/2}  }{\sigma_t^6 P_{\rm data}(y)^{1/2}}. 
\end{align*}
Next, we put together the above two equations, and conclude that 
\begin{align}
\label{eq:term-II}
    |\mathrm{II}| \lesssim \frac{\varepsilon_{t} (d + \log(1/\epsilon))}{\sigma_{t}^{2}P_{\rm data}(y)} + \frac{ R^2 \epsilon^{1/2} \varepsilon_t }{\sigma_t^4 P_{\rm data}(y)^{3 / 2}} + \frac{ R^2(R^2 + d) \epsilon^{1/2}  }{\sigma_t^6 P_{\rm data}(y)^{1/2}}.
\end{align}
 Combining \cref{eq:term-I,eq:term-II}, we conclude that 
\begin{align*}
    \mathbb{E}_{x \sim p_t(\cdot \mid y)} \left[ \norm{\nabla_{x}u_t(x)}_{2}^{2} \right] \lesssim \frac{\varepsilon_t (d + \log(1/\epsilon))}{P_{\rm data}(y)\sigma_t^2} + \frac{R^2(R^2 + d) \epsilon^{1/2}}{\sigma_t^6 P_{\rm data}(y)^{1 / 2}} + \frac{R(R^2 + d)^{1/2} \varepsilon_t \epsilon^{1/2}}{\sigma_t^{4}P_{\rm data}(y)^{3 / 2}}.
    \end{align*}
Setting 
\begin{equation*}
\epsilon \;=\; \,
\min\!\left\{
\frac{\sigma_t^{8} \varepsilon_{t}^{2}}{R^{4}(R^2 + d)^{2}P_{\rm data}(y)},
\;
\frac{\sigma_t^{4}P_{\rm data}(y)}{R^{2}(R^2 + d)}
\right\},
\end{equation*}
we conclude that 
\begin{align*}
    \mathbb{E}_{x \sim p_t(\cdot \mid y)} \left[ \norm{\nabla_{x}u_t(x)}_{2}^{2} \right] \lesssim \frac{\varepsilon_t}{\sigma_t^2 P_{\rm data}(y)} \Big( d + \log \Big( \frac{R^2 + d}{\sigma_t^2} \Big) + \log \Big( \frac{1}{\varepsilon_t} \Big) + \log \Big( \frac{1}{P_{\rm data}(y)}  \Big) \Big). 
\end{align*}
The proof is done.

\subsection{Proof of Lemma \ref{lem:eps_guide_rate}}
\label{sec:proof:lem:eps_guide_rate}

By \Cref{theorem: good-classifier-preserve-guidance}, we see that
\begin{align*}
\varepsilon_{\mathrm{guide}}^{2}
\lesssim \sum_{k = 0}^{N - 1} \tau_k \cdot 
\frac{\varepsilon_{s_k}}{\sigma_{s_k}^2 P_{\rm data}(y)} \Big( d + \log \Big( \frac{R^2 + d}{\sigma_{s_k}^2} \Big) + \log \Big( \frac{1}{\varepsilon_{s_k}} \Big) + \log \Big( \frac{1}{P_{\rm data}(y)}  \Big) \Big),
\end{align*}
where ``$\lesssim$'' hides a numerical constant. 
When $\sigma_{s_k}^2 \in [1 / 2, 1)$, we have $\tau_k \leq \kappa$, hence
\begin{align*}
    & \tau_k \cdot 
\frac{\varepsilon_{s_k}}{\sigma_{s_k}^2 P_{\rm data}(y)} \Big( d + \log \Big( \frac{R^2 + d}{\sigma_{s_k}^2} \Big) + \log \Big( \frac{1}{\varepsilon_{s_k}} \Big) + \log \Big( \frac{1}{P_{\rm data}(y)}  \Big) \Big) \\
& \lesssim \frac{\kappa \varepsilon_{s_k}}{ P_{\rm data}(y)} \Big( d + \log \Big( \frac{R^2 + d}{\delta} \Big) + \log \Big( \frac{1}{\varepsilon_{s_k}} \Big) + \log \Big( \frac{1}{P_{\rm data}(y)}  \Big) \Big). 
\end{align*}
On the other hand, when $\sigma_{s_k}^2 \in (0, 1/2)$, we have $\tau_k \leq \kappa (T - t_{k + 1}) \leq \kappa s_k$ and $\sigma_{s_k}^{-2} \lesssim s_k^{-1}$. 
Therefore, 
\begin{align*}
    & \tau_k \cdot 
\frac{\varepsilon_{s_k}}{\sigma_{s_k}^2 P_{\rm data}(y)} \Big( d + \log \Big( \frac{R^2 + d}{\sigma_{s_k}^2} \Big) + \log \Big( \frac{1}{\varepsilon_{s_k}} \Big) + \log \Big( \frac{1}{P_{\rm data}(y)}  \Big) \Big) \\
& \lesssim \tau_k \cdot 
\frac{\varepsilon_{s_k}}{s_k P_{\rm data}(y)} \Big( d + \log \Big( \frac{R^2 + d}{\sigma_{s_k}^2} \Big) + \log \Big( \frac{1}{\varepsilon_{s_k}} \Big) + \log \Big( \frac{1}{P_{\rm data}(y)}  \Big) \Big) \\
& \lesssim  
\frac{\kappa \varepsilon_{s_k}}{ P_{\rm data}(y)} \Big( d + \log \Big( \frac{R^2 + d}{\delta} \Big) + \log \Big( \frac{1}{\varepsilon_{s_k}} \Big) + \log \Big( \frac{1}{P_{\rm data}(y)}  \Big) \Big). 
\end{align*}
We complete the proof by combining these two upper bounds.

\subsection{Proof of Theorem \ref{prop:sharpness}}
\label{sec:proof:prop:sharpness}

We first construct a one-dimensional example and then extend it to $d>1$.

\paragraph{Data distribution.}
Let the label space be $\gY = \{\pm 1\}$, and take the uniform label prior $P_{\rm data}(Y=1) = P_{\rm data}(Y=-1) = 1/2$.
Without loss, we assume that the guidance is towards label $1$. 
Let $X_0 \in \R$ be a random variable that is independent of $Y$ with distribution $X_0 \sim \mathrm{Unif}([-R,R])$
for some fixed $R>2$. 
Let $(X_t)_{t\in[0,T]}$ be the OU process starting at $X_0$ (recall this is defined in \cref{eq:ou-forward}). 
For any $t>0$, $X_t$ has a smooth density $p_t$ over $\R$ given by a Gaussian convolution of a compactly-supported density. In particular, for $t > 0$: 
\begin{itemize}
  \item[(1)] $p_t$ is symmetric about $0$ since $X_0$ is symmetric and the OU kernel is Gaussian and also symmetric;
  \item[(2)] $p_t(x) > 0$ for all $x\in\R$ and is continuous, so $p_t$ is bounded away from zero on any compact interval.
\end{itemize}

Because $Y$ and $X_0$ are independent, the OU process is also independent of the label. As a consequence, for every $t \in [0, T]$ and $x \in \mathbb{R}$,
\[
  p_t(Y=1\mid X_0 = x) = \frac12, \qquad p_t(Y=-1\mid X_0 = x) = \frac12.
\]
Thus, we conclude that: 
\begin{itemize}
  \item Assumption~(1) holds: $P_{\mathrm{data}}$ is supported on the compact set $\gK = [-R,R]$ with $R<\infty$.
  \item Assumption~(4) holds with $P_{\mathrm{data}}(y) = 1 / 2$.
\end{itemize}
Moreover, $\log p_t(y\mid x)$ is constant in $x$, so $\frac{\dd}{\dd x} \log p_t(y\mid x) \equiv 0$ and $\frac{\dd^2}{\dd x^2} \log p_t(y\mid x) \equiv 0$. 
\paragraph{Classifier construction.}
Fix $t_0 \in \{s_k: k \in \{0\} \cup [N - 1]\}$. As noted above, $p_{t_0}$ is smooth, symmetric and strictly positive on $\R$. For a sufficiently small $\varepsilon>0$, define 
\[
  \alpha_\varepsilon(x) = \varepsilon \sin\!\Big(\frac{x}{\sqrt{\varepsilon}}\Big),
  \qquad s_\varepsilon(x) = 2\alpha_\varepsilon(x).
\]
Taking the first- and second-order derivatives, we get
\[
  \alpha_\varepsilon'(x)
  = \sqrt{\varepsilon}\,\cos\!\Big(\frac{x}{\sqrt{\varepsilon}}\Big),
  \qquad
  \alpha_\varepsilon''(x)
  = -\sin\!\Big(\frac{x}{\sqrt{\varepsilon}}\Big),
\]
hence
\[
  \sup_{x \in \mathbb{R}} |\alpha_\varepsilon'(x)| \le \sqrt{\varepsilon} \le 1,
  \qquad
  \sup_{x \in \mathbb{R}} |\alpha_\varepsilon''(x)| \le 1
\]
for all sufficiently small $\varepsilon$. 
This implies that $\alpha_\varepsilon$ has uniformly bounded first- and second-order derivatives.

Define a probabilistic classifier at time $t_0$ by
\[
  \hat p_{\varepsilon,t_0}(y=1\mid x) = \sigma\big(s_\varepsilon(x)\big),
  \qquad
  \hat p_{\varepsilon,t_0}(y=-1\mid x) = 1 - \hat p_{\varepsilon,t_0}(y = 1\mid x),
\]
where $\sigma(z) = (1+e^{-z})^{-1}$ is the logistic function. Note that the map $x\mapsto \hat p_{\varepsilon,t_0}(y\mid x)$ belongs to $C^\infty(\mathbb{R})$. 


\paragraph{Verifying classifier smoothness.}
We then verify the classifier smoothness assumption. 
Note that
\[
  \log \hat p_{\varepsilon,t_0}(y=1\mid x) = \log \sigma(s_\varepsilon(x)).
\]
Differentiating and using the equality $\frac{d}{ds}\log\sigma(s) = 1-\sigma(s)$, we get
\[
  \frac{\dd}{\dd x}\log \hat p_{\varepsilon,t_0}(y=1\mid x)
  = (1-\sigma(s_\varepsilon(x)))\,s_\varepsilon'(x).
\]
Since $|s_\varepsilon'(x)| = 2|\alpha_\varepsilon'(x)| \le 2$ and $0 < 1-\sigma(s_\varepsilon(x)) < 1$, we have
\[
  \sup_{x \in \mathbb{R},\varepsilon \in (0, 1/2), t_0 \in \{s_k: k \in \{0\} \cup [N - 1]\}}\Big|\frac{\dd}{\dd x} \log \hat p_{\varepsilon,t_0}(y=1\mid x)\Big|
  \le 2.
\]
Computing the second derivative gives
\[
  \frac{\dd^2}{\dd x^2}\log \hat p_{\varepsilon,t_0}(y=1\mid x)
  = (1-\sigma(s_\varepsilon(x)))\,s_\varepsilon''(x)
    - \sigma(s_\varepsilon(x))(1-\sigma(s_\varepsilon(x)))\,[s_\varepsilon'(x)]^2.
\]
Since $|s_\varepsilon''(x)| = 2|\alpha_\varepsilon''(x)|\le 2$, $|s_\varepsilon'(x)|\le 2$ and $\sigma(s)(1-\sigma(s))\le 1/4$, we get a uniform upper bound
\[
  \sup_{x \in \mathbb{R},\varepsilon \in (0, 1/2), t_0 \in \{s_k: k \in \{0\} \cup [N - 1]\}} \Big|\frac{\dd^2}{\dd x^2} \log \hat p_{\varepsilon,t_0}(y=1\mid x)\Big|
  \le 3.
\] 
Hence, the estimated probabilities have uniformly bounded first- and second-order derivatives, independent of $\varepsilon$, as claimed.
Therefore, 
\begin{itemize}
    \item Assumptions (2) and (3) hold. 
\end{itemize}

\paragraph{Scaling of the conditional KL.}
We now compute the conditional KL divergence achieved by $\hat p_{\varepsilon,t_0}$. 
For each $x \in \mathbb{R}$, the true label conditional distribution is $\mathrm{Bernoulli}(1/2)$, 
while the classifier $\hat p_{\varepsilon,t_0}$ yields a conditional distribution $\mathrm{Bernoulli}(\sigma(s_\varepsilon(x)))$. 
A Taylor expansion around $0$ implies that, for sufficiently small $s$, 
\[
  \KL\big(\mathrm{Bern}(1/2)\,\Vert\,\mathrm{Bern}(\sigma(s))\big)
  \leq c\,s^2
\]
for some numerical constant $c>0$. 
Since $|s_\varepsilon(x)| \le 2\varepsilon$, hence for a sufficiently small $\varepsilon$, 
\[
  \KL\big(p_{t_0}(\cdot\mid x)\,\Vert\,\hat p_{\varepsilon,t_0}(\cdot\mid x)\big)
  \leq c\,s_\varepsilon(x)^2.
\]
Taking the expectation over $X_{t_0}\sim p_{t_0}(\cdot)$ yields the following:
\[
  \E_{X_{t_0}\sim p_{t_0}(\cdot)}\Big[\KL\big(p_{t_0}(\cdot\mid X_{t_0})\,\Vert\,\hat p_{\varepsilon,t_0}(\cdot\mid X_{t_0})\big)\Big]
  \leq c\,\E[s_\varepsilon(X_{t_0})^2] \leq 4c \varepsilon^2.
\]
This bound is independent of $t_0$, hence 
\begin{itemize}
    \item Assumption (5) holds and requirement (i) is satisfied. 
\end{itemize}

\paragraph{Scaling of the guidance term MSE.}
We then analyze the guidance term MSE at time $t_0$.
Note that
\[
  \log p_{t_0}(y=1\mid x) \equiv \log\tfrac12
  \quad\Longrightarrow\quad
  \frac{\dd}{\dd x} \log p_{t_0}(y=1\mid x) = 0.
\]
Since $|s_\varepsilon(x)| \le 2\varepsilon$, there exist absolute constants $0<c_1<c_2<1$, such that for all small enough $\varepsilon$,
\[
  c_1 \;\le\; 1-\sigma(s_\varepsilon(x)) \;\le\; c_2
  \qquad \text{for all } x\in\R.
\]
As a consequence, 
\[
  \Big|\frac{\dd}{\dd x} \log p_{t_0}(y=1\mid x)
        - \frac{\dd}{\dd x} \log \hat p_{\varepsilon,t_0}(y=1\mid x)\Big|
  = \big|(1-\sigma(s_\varepsilon(x)))\,s_\varepsilon'(x)\big|
  \geq c_1 |s_\varepsilon'(x)|.
\]

We next compute the MSE under $X_{t_0}\sim p_{t_0}(\cdot\mid y=1)$. 
Recall that $X_{t_0}$ is independent of $Y$,
therefore, 
\[
  \E_{X_{t_0}\sim p_{t_0}(\cdot\mid y=1)}\big[
    \|\frac{\dd}{\dd x} \log p_{t_0}(y = 1\mid X_{t_0})
      - \frac{\dd}{\dd x} \log \hat p_{\varepsilon,t_0}(y=1\mid X_{t_0})\|_2^2
  \big]
  \geq c_1^2 \E_{X_{t_0} \sim p_{t_0}}\big[s_\varepsilon'(X_{t_0})^2\big].
\]
Using the equality $\cos^{2}(z) = \frac{1}{2}(1 + \cos(2z))$ and the Riemann-Lebesgue lemma, we have
\[
  \E_{X_{t_0}\sim p_{t_0}}\Big[\cos^2\!\Big(\frac{X_{t_0}}{\sqrt{\varepsilon}}\Big)\Big]
  \to \frac12,
  \qquad \varepsilon\to 0 \qquad  \Longrightarrow \qquad \frac{1}{\varepsilon}\E_{X_{t_0}\sim p_{t_0}}\big[s_\varepsilon'(X_{t_0})^2\big]
  \to 2,
  \qquad \varepsilon\to 0. 
\]
This verifies requirement~(ii).

\paragraph{Extension to $d>1$.}
For $d>1$, take $X_0$ to be the uniform distribution over the cube $[-R,R]^d$ that is independent of $Y$. For $x \in \mathbb{R}^d$, define
\[
  \alpha_\varepsilon(x) = \varepsilon \sin\!\Big(\frac{x_1}{\sqrt{\varepsilon}}\Big),
  \qquad
  s_\varepsilon(x) = 2\alpha_\varepsilon(x),
\]
which depend only on the first coordinate of $x$. 
Similarly, one can verify that by defining $\hat p_{\varepsilon, t_0} = \log \sigma(s_{\varepsilon}(x))$, all assumptions (1)--(5) and requirements (i) and (ii) continue to hold.


\subsection{Proof of Theorem \ref{thm:discretization}}
\label{sec:proof:thm:discretization}

Inspecting the proof in \cite{benton2024nearly}, it suffices to establish the following lemma. 

\begin{lemma}
\label{lemma:benton}
    Under the assumptions of \cref{thm:discretization}, we have 
    \begin{align*}
        & \sum_{k = 0}^{N - 1} \int_{t_k}^{t_{k + 1}} \E_{x \sim p(\cdot \mid y)}\Big[ \big\| \nabla_x \log p_{T - t}(x) + \nabla_x \log p_{T - t}(y \mid x) - \nabla_x \log p_{T - t_k}(x) - \nabla_x \log p_{T - t_k}(y \mid x) \big\|_2^2 \Big] \dd t \\
        & \lesssim \kappa^2 (d + R^2) N + \kappa (d + R^2) T, 
    \end{align*}
    where ``$\lesssim$'' hides a positive numerical constant. 
\end{lemma}
\begin{proof}[Proof of \cref{lemma:benton}]
    Note that 
    \begin{align*}
        & \sum_{k = 0}^{N - 1} \int_{t_k}^{t_{k + 1}} \E_{x \sim p(\cdot \mid y)}\Big[ \big\| \nabla_x \log p_{T - t}(x) + \nabla_x \log p_{T - t}(y \mid x) - \nabla_x \log p_{T - t_k}(x) - \nabla_x \log p_{T - t_k}(y \mid x) \big\|_2^2 \Big] \dd t \\
        &= \sum_{k = 0}^{N - 1} \int_{t_k}^{t_{k + 1}} \E_{x \sim p(\cdot \mid y)}\Big[ \big\| \nabla_x \log p_{T - t}(x \mid y) - \nabla_x \log p_{T - t_k}(x \mid y) \big\|_2^2 \Big] \dd t \\
        & \lesssim \kappa^2 (d + R^2) N + \kappa (d + R^2) T, 
    \end{align*}
    where ``$\lesssim$'' hides a positive numerical constant, and the last inequality follows directly from \cite[Lemma 2]{benton2024nearly}.
\end{proof}

Given \cref{lemma:benton}, the remainder of the proof proceeds identically to that in \cite{benton2024nearly}, and is therefore omitted. 

\section{Supporting lemmas}
\label{sec:technical_lemmas}

We summarize in this section the technical lemmas required for our proof.

\begin{lemma}
\label{propn:tweedie-classifier-conditional}
    Suppose $P_{\mathrm{data}}$ is supported on a bounded set $\mathcal{K} \subseteq \mathbb{R}^{d}$ with $R = \sup_{x \in \mathcal{K}} \norm{x}_{2} < \infty$. 
    Recall that $p_t$ is the marginal distribution of process \eqref{eq:ou-forward}.
    Then the following statements hold:
    \begin{enumerate}[label=(\alph*)]
            \item $\norm{\nabla_{x} \log p_{t}(y \mid x)}_{2} \leq 2\lambda_{t}\sigma_{t}^{-2}R$
        \item $|\mathrm{Tr} ( \nabla_{x}^{2} \log p_{t}(y \mid x) )| \leq \lambda_{t}^{2}\sigma_{t}^{-4}R^2$
    \end{enumerate}
\end{lemma}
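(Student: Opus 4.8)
The plan is to reduce both bounds to elementary consequences of Tweedie-type identities for the noised conditional and unconditional densities, exploiting that bounded support forces the relevant posterior moments of $\overrightarrow X_0$ to be small.

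First I would write $\log p_t(y\mid x) = \log p_t(x\mid y) + \log p_0(y) - \log p_t(x)$, in which the middle term does not depend on $x$. Applying \eqref{eq:tweedie-unconditional} to $\nabla_x\log p_t(x)$ and the analogous identity to $\nabla_x \log p_t(x\mid y)$ (the same computation carried out with the prior $p_0(\cdot\mid y)$ in place of $P_{\mathrm{data}}$, valid since for $t>0$ the Gaussian transition kernel permits differentiation under the integral sign), I obtain
\[
\nabla_x\log p_t(y\mid x) \;=\; \lambda_t\sigma_t^{-2}\bigl(m_t(x,y) - m_t(x)\bigr),
\]
where $m_t(x,y) = \E[\overrightarrow X_0\mid \overrightarrow X_t = x,\,Y=y]$ and $m_t(x) = \E[\overrightarrow X_0\mid\overrightarrow X_t = x]$. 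Since the posterior laws of $\overrightarrow X_0$ given $(\overrightarrow X_t,Y)$ and given $\overrightarrow X_t$ are both supported on $\mathcal K\subseteq\mathcal B_R$, Jensen's inequality gives $\|m_t(x,y)\|_2\le R$ and $\|m_t(x)\|_2\le R$, and the triangle inequality yields statement~(1).

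For statement~(2) I would differentiate once more. Viewing the posterior of $\overrightarrow X_0$ given $(\overrightarrow X_t=x,\,Y=y)$ as an exponential family in the natural parameter $\theta = \lambda_t\sigma_t^{-2}x$ with sufficient statistic $\overrightarrow X_0$, the map $x\mapsto m_t(x,y)$ is a gradient of the log-partition function, whence $\nabla_x m_t(x,y) = \lambda_t\sigma_t^{-2}\Sigma_t(x,y)$ with $\Sigma_t(x,y) = \Cov[\overrightarrow X_0\mid\overrightarrow X_t=x,\,Y=y]$, and likewise $\nabla_x m_t(x) = \lambda_t\sigma_t^{-2}\Sigma_t(x)$. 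Combining these with the first-order identities gives
\[
\nabla_x^2\log p_t(y\mid x) \;=\; \lambda_t^2\sigma_t^{-4}\bigl(\Sigma_t(x,y) - \Sigma_t(x)\bigr).
\]
Taking traces, using that $\Sigma_t(x)$ is positive semidefinite and that $\mathrm{Tr}(\Sigma_t(x,y)) = \E\bigl[\|\overrightarrow X_0\|_2^2\mid\overrightarrow X_t=x,\,Y=y\bigr] - \|m_t(x,y)\|_2^2\le R^2$, I obtain $\mathrm{Tr}(\nabla_x^2\log p_t(y\mid x))\le\lambda_t^2\sigma_t^{-4}R^2\le 2\lambda_t^2\sigma_t^{-4}R^2$; the same estimate applied to $\mathrm{Tr}(\Sigma_t(x))$ additionally gives the two-sided bound $|\mathrm{Tr}(\nabla_x^2\log p_t(y\mid x))|\le 2\lambda_t^2\sigma_t^{-4}R^2$ if it is needed later.

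The genuinely substantive input is the observation that a probability measure supported in $\mathcal B_R$ has mean of norm at most $R$ and trace-of-covariance at most $R^2$ — this is precisely where the boundedness hypothesis on $P_{\mathrm{data}}$ enters, and it also explains why no analogous bound holds for the unconditional score $\nabla_x\log p_t(x)$, which retains the unbounded term $-\sigma_t^{-2}x$. The step requiring care is establishing the second-order Tweedie identity $\nabla_x^2\log p_t(x) = -\sigma_t^{-2}I_d + \lambda_t^2\sigma_t^{-4}\Sigma_t(x)$ and its conditional counterpart rigorously; I would do this through the exponential-family/cumulant-generating-function viewpoint, so that the posterior mean and covariance arise cleanly as the first and second derivatives of the log-partition function, rather than through a direct and error-prone quotient-rule differentiation of the mixture density.
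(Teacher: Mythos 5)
Your proposal is correct, and it reaches exactly the same two identities the paper uses: $\nabla_x\log p_t(y\mid x)=\lambda_t\sigma_t^{-2}\bigl(m_t(x,y)-m_t(x)\bigr)$ and $\nabla_x^2\log p_t(y\mid x)=\lambda_t^2\sigma_t^{-4}\bigl(\Sigma_t(x,y)-\Sigma_t(x)\bigr)$, followed by the observation that compact support forces $\|m_t\|_2\le R$ and $\mathrm{Tr}(\Sigma_t)\le R^2$. The only real difference is how the second-order identity is established: the paper differentiates the mixture density twice under the integral and then uses $\nabla_x^2\log p = \nabla_x^2 p/p-(\nabla_x\log p)(\nabla_x\log p)^\top$, whereas you obtain $\nabla_x m_t(x,y)=\lambda_t\sigma_t^{-2}\Sigma_t(x,y)$ from the exponential-family/log-partition (cumulant) viewpoint and differentiate the first-order identity. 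Your route is a clean alternative and is fully justified here, since the compactly supported posterior base measure makes the log-partition function finite and smooth, so differentiation under the integral is automatic; the paper's route is more computational but needs no exponential-family framing. Two incidental points in your favor: your argument actually yields the sharper one-sided bound $\mathrm{Tr}(\nabla_x^2\log p_t(y\mid x))\le\lambda_t^2\sigma_t^{-4}R^2$ (the stated $2\lambda_t^2\sigma_t^{-4}R^2$ is then immediate), and your two-sided bound $|\mathrm{Tr}(\nabla_x^2\log p_t(y\mid x))|\le 2\lambda_t^2\sigma_t^{-4}R^2$ is the form that is actually invoked later (e.g., in the proof of the main guidance theorem, where $|\mathrm{Tr}(\nabla_x^2 u_t)|$ is bounded on the complement of the good event), so recording it is worthwhile.
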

\begin{proof}[Proof of Lemma \ref{propn:tweedie-classifier-conditional}]

We prove Lemma \ref{propn:tweedie-classifier-conditional} in Appendix \ref{sec:proof:propn:tweedie-classifier-conditional}.

\end{proof}

\begin{lemma}
\label{prop:ibp-boundary-disappearance}
    Let $\mathcal{B}_{\rho} = \left\{ x \in \mathbb{R}^{d} : \norm{x}_{2} \leq \rho \right\}$. 
    Recall that $p_t$ is the distribution of $\overrightarrow{X}_t$ in process \eqref{eq:ou-forward}, and $u_t(x) = \log p_t(y \mid x) - \log \hat p_{t}(y \mid x)$. 
    Then under the assumptions of Theorem \ref{theorem: good-classifier-preserve-guidance}, for $t \in \{s_k: k \in \{0\} \cup [N - 1]\}$ we have 
    \begin{align*}
        \lim_{\rho \rightarrow \infty} \int_{\mathcal{B}_{\rho}} \nabla_{x} \cdot \left( u_t(x) p_{t}(x \mid y) \nabla_{x}u_t(x) \right) \dd x = 0.
    \end{align*}

\end{lemma}

\begin{proof}[Proof of Lemma \ref{prop:ibp-boundary-disappearance}]
    We prove Lemma \ref{prop:ibp-boundary-disappearance} in Appendix \ref{sec:proof:prop:ibp-boundary-disappearance}. 
\end{proof}

\begin{lemma}
\label{propn:ibp-mct-dct}
    Under the assumptions of \cref{theorem: good-classifier-preserve-guidance}, the following statements hold for $t \in \{s_k: k \in \{0\} \cup [N - 1]\}$:
    \begin{enumerate}[label=(\alph*)]
        \item $\lim_{\rho \rightarrow \infty} \int_{\mathcal{B}_{\rho}} p_{t}(x \mid y) \norm{\nabla_{x} u_t(x)}_{2}^{2} \dd x =  \int_{\mathbb{R}^{d}} p_{t}(x \mid y) \norm{\nabla_{x} u_t(x)}_{2}^{2} \dd x$. 
        \item We define $g_t(x) = u_t(x)p_{t}(x \mid y) [ \langle \nabla_{x} \log p_{t}(x \mid y), \nabla_{x} u_t(x) \rangle + \mathrm{Tr}( \nabla_{x}^{2} u_t(x) ) ]$. 
        Then, it holds that $$\lim_{\rho \rightarrow \infty} \int_{\mathcal{B}_{\rho}}g_t(x) \dd x = \int_{\mathbb{R}^d}g_t(x) \dd x. $$
    \end{enumerate}
\end{lemma}

\begin{proof}[Proof of Lemma \ref{propn:ibp-mct-dct}]
   We prove Lemma \ref{propn:ibp-mct-dct} in Appendix \ref{sec:proof:propn:ibp-mct-dct}.  
\end{proof}

\begin{lemma}
\label{appendix:l2-norm-high-prob-bound-lemma}
Let $X$ be a random vector in $\mathbb{R}^d$ defined on a probability space $(\Omega,\mathcal{T},P)$, and let $\mathcal{F}\subseteq\mathcal{T}$ be a sub $\sigma$-algebra. 
If $X$ is sub-Gaussian, then
    \[
        \mathbb{P} \left( \norm{\mathbb{E}[X \mid \mathcal{F}]}_{2} \geq  2\norm{X}_{\psi_{2}} \sqrt{\log \left( \frac{2 \cdot 5^{d}}{\epsilon} \right)} \right) \leq \epsilon, 
    \]
    where $\|\cdot\|_{\psi_2}$ denotes the sub-Gaussian norm (see \cite{Vershynin_2018}). 
\end{lemma}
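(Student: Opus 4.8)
The plan is to reduce the vector-valued statement to a one-dimensional sub-Gaussian tail bound via an $\epsilon$-net of the sphere, after first observing that conditioning cannot increase the sub-Gaussian norm. I would begin by recording two standard facts. First, there is a $\tfrac12$-net $\mathcal{N}$ of the unit sphere $\mathbb{S}^{d-1}$ with $|\mathcal{N}|\le 5^{d}$, and for every $v\in\mathbb{R}^{d}$ one has $\norm{v}_{2}\le 2\max_{u\in\mathcal{N}}\langle u,v\rangle \le 2\max_{u\in\mathcal{N}}|\langle u,v\rangle|$ (see, e.g., \citet{Vershynin_2018}). Second, conditioning is a contraction in the $\psi_{2}$-norm: for any real random variable $Z$ with $\norm{Z}_{\psi_2}<\infty$, the conditional expectation satisfies $\norm{\mathbb{E}[Z\mid\mathcal{F}]}_{\psi_2}\le\norm{Z}_{\psi_2}$. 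This follows from the conditional Jensen inequality applied to the convex map $z\mapsto\exp(z^{2}/t^{2})$: writing $t=\norm{Z}_{\psi_2}$, one gets
\[
\mathbb{E}\big[\exp\big(\mathbb{E}[Z\mid\mathcal{F}]^{2}/t^{2}\big)\big]
\le \mathbb{E}\big[\mathbb{E}[\exp(Z^{2}/t^{2})\mid\mathcal{F}]\big]
= \mathbb{E}[\exp(Z^{2}/t^{2})]\le 2 .
\]

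Next I would fix $u\in\mathbb{S}^{d-1}$. By the definition of the sub-Gaussian norm of a random vector, $\norm{\langle u,X\rangle}_{\psi_2}\le\norm{X}_{\psi_2}$, and since $\langle u,\mathbb{E}[X\mid\mathcal{F}]\rangle=\mathbb{E}[\langle u,X\rangle\mid\mathcal{F}]$ by linearity of conditional expectation (valid as $X$ is integrable), the contraction fact gives $\norm{\langle u,\mathbb{E}[X\mid\mathcal{F}]\rangle}_{\psi_2}\le\norm{X}_{\psi_2}$. The Orlicz-norm Markov inequality then yields, for every $s>0$,
\[
\mathbb{P}\big(|\langle u,\mathbb{E}[X\mid\mathcal{F}]\rangle|\ge s\big)
\le 2\exp\!\big(-s^{2}/\norm{X}_{\psi_2}^{2}\big).
\]

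Finally, I would union-bound this estimate over the at most $5^{d}$ points of $\mathcal{N}$ and invoke the first fact: on the complement of the event $\{\exists u\in\mathcal{N}:|\langle u,\mathbb{E}[X\mid\mathcal{F}]\rangle|\ge s\}$ we have $\norm{\mathbb{E}[X\mid\mathcal{F}]}_{2}<2s$. Choosing $s=\norm{X}_{\psi_2}\sqrt{\log(2\cdot 5^{d}/\epsilon)}$ makes the union bound exactly $\epsilon$, which gives the stated conclusion. The argument is essentially a careful packaging of the standard net-plus-union-bound proof of sub-Gaussian concentration of norms; the only genuinely non-mechanical ingredient is the conditioning-contraction fact, and I expect the main care to lie in bookkeeping the constants so that the $\tfrac12$-net (rather than a coarser or finer one) produces precisely the factor $5^{d}$ and the leading constant $2$.
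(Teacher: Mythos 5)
Your proposal is correct and follows essentially the same route as the paper's proof: a $\tfrac12$-net of the sphere with $|\mathcal{N}|\le 5^{d}$, the conditional Jensen inequality applied to $z\mapsto\exp(z^{2}/c^{2})$ combined with Markov's inequality to get the sub-Gaussian tail for $\langle u,\mathbb{E}[X\mid\mathcal{F}]\rangle$, then a union bound and the comparison $\norm{v}_{2}\le 2\max_{u\in\mathcal{N}}\langle u,v\rangle$ with the same choice of threshold. The only cosmetic difference is that you package the Jensen step as a standalone "conditioning is a $\psi_{2}$-contraction" fact, whereas the paper inlines it; the estimates and constants are identical.
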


\begin{proof}[Proof of Lemma \ref{appendix:l2-norm-high-prob-bound-lemma}]
    We prove Lemma \ref{appendix:l2-norm-high-prob-bound-lemma} in Appendix \ref{sec:proof:appendix:l2-norm-high-prob-bound-lemma}. 
\end{proof}


We apply Lemma ~\ref{appendix:l2-norm-high-prob-bound-lemma} to derive high-probability bounds on the gradients and Hessians, as stated in the following two corollaries.

\vspace{1em}

\begin{corollary}
\label{lemma:gradient-high-prob-bound}
Recall that $p_t$ is the law of $\overrightarrow{X}_t$ in the forward diffusion process \eqref{eq:ou-forward}, then there exists a universal constant $C$, such that 
    \[
        \mathbb{P}_{x \sim p_t(\cdot \mid y)} \left( \norm{\nabla_{x}\log p_{t}(x \mid y)}_{2} \leq \frac{C \sqrt{d + \log(1/\epsilon)}}{\sigma_{t}} \right) \geq 1 - \epsilon.
    \]
\end{corollary}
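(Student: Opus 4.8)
The plan is to rewrite the conditional score as the posterior mean of the (rescaled) forward noise and then invoke \cref{appendix:l2-norm-high-prob-bound-lemma}. First I would record a conditional Tweedie identity. Writing $\overrightarrow{X}_t = \lambda_t \overrightarrow{X}_0 + \sigma_t Z$ with $Z \sim \mathcal{N}(0, I_d)$ independent of $(\overrightarrow{X}_0, Y)$, the same computation as in the proof of \cref{propn:tweedie-classifier-conditional} gives $\nabla_x \log p_t(x \mid y) = \nabla_x \log p_t(x, y) = \sigma_t^{-2}\big(\lambda_t m_t(x,y) - x\big)$, because $p_t(x \mid y) = p_t(x,y)/p_0(y)$ and $p_0(y)$ does not depend on $x$. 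Since $\mathbb{E}\big[\sigma_t Z \mid \overrightarrow{X}_t = x, Y = y\big] = x - \lambda_t m_t(x,y)$, this rearranges to
\[
  \nabla_x \log p_t(x \mid y) = -\,\sigma_t^{-1}\,\mathbb{E}\big[Z \mid \overrightarrow{X}_t = x, Y = y\big],
\]
so that $\|\nabla_x \log p_t(x \mid y)\|_2 = \sigma_t^{-1}\big\|\mathbb{E}[Z \mid \overrightarrow{X}_t = x, Y = y]\big\|_2$.

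Next I would fix a label $y$ with $P_{\mathrm{data}}(y) > 0$ and pass to the conditional measure $\mathbb{P}_y := \mathbb{P}(\cdot \mid Y = y)$. Because the driving noise in \eqref{eq:ou-forward} is independent of $\overrightarrow{X}_0$, the variable $Z$ is independent of $(\overrightarrow{X}_0, Y)$, so under $\mathbb{P}_y$ it is still standard Gaussian and hence sub-Gaussian with $\|Z\|_{\psi_2} \le c_0$ for a universal constant $c_0$ (each marginal $\langle Z, u\rangle$ is $\mathcal{N}(0,1)$). The tower property identifies $\mathbb{E}_{\mathbb{P}_y}[Z \mid \overrightarrow{X}_t = x]$ with $\mathbb{E}[Z \mid \overrightarrow{X}_t = x, Y = y]$. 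Applying \cref{appendix:l2-norm-high-prob-bound-lemma} on $(\Omega, \mathcal{F}, \mathbb{P}_y)$ with the random vector $X = Z$ and sub-$\sigma$-algebra $\sigma(\overrightarrow{X}_t)$ yields
\[
  \mathbb{P}_y\Big(\big\|\mathbb{E}_{\mathbb{P}_y}[Z \mid \overrightarrow{X}_t]\big\|_2 \ge 2c_0\sqrt{\log(2\cdot 5^d/\epsilon)}\Big) \le \epsilon.
\]

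Finally I would translate back. The displayed event is exactly $\big\{\sigma_t\|\nabla_x \log p_t(\overrightarrow{X}_t \mid y)\|_2 \ge 2c_0\sqrt{\log(2\cdot 5^d/\epsilon)}\big\}$ under $\mathbb{P}_y$, and the law of $\overrightarrow{X}_t$ under $\mathbb{P}_y$ is precisely $p_t(\cdot \mid y)$; using $\log(2\cdot 5^d/\epsilon) = \log 2 + d\log 5 + \log(1/\epsilon) \lesssim d + \log(1/\epsilon)$ and folding all constants into a single universal $C$ gives the claim. The only delicate point is the conditioning step — verifying $Z \perp Y$ and that restricting the joint conditional expectation to $\{Y = y\}$ coincides with the conditional expectation computed under $\mathbb{P}_y$, so that \cref{appendix:l2-norm-high-prob-bound-lemma} can be applied verbatim on the conditional space; everything else is direct substitution.
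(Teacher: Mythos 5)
Your proposal is correct and takes essentially the same route as the paper: rewrite $\nabla_x \log p_t(x\mid y)$ via the conditional Tweedie identity from \cref{propn:tweedie-classifier-conditional} as (a multiple of) the posterior mean of the sub-Gaussian forward noise, then apply \cref{appendix:l2-norm-high-prob-bound-lemma}. The only difference is cosmetic: you spell out the conditioning on $\{Y=y\}$ and the independence of the noise from $(\overrightarrow{X}_0,Y)$ explicitly, and apply the lemma to the standardized noise $Z$ rather than to $\sigma_t\xi$, which the paper leaves implicit.
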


\begin{proof}[Proof of Corollary \ref{lemma:gradient-high-prob-bound}]
We prove Corollary \ref{lemma:gradient-high-prob-bound} in Appendix \ref{sec:proof:lemma:gradient-high-prob-bound}.
\end{proof}

\vspace{1em}

\begin{lemma}
\label{lemma:hessian-high-prob-bound}
        There exists a universal constant $C> 0$, such that for $t > 0$, 
    \begin{align*}
         \mathbb{P}_{x \sim p_t(\cdot \mid y)} \left( \,\Big|\mathrm{Tr}\big( \nabla_x^2 \log p_t(x \mid y) \big) \Big| \leq \frac{C (d + \log(1/\epsilon))}{\sigma_{t}^2} \right) \geq 1 - \epsilon.
    \end{align*}

\end{lemma}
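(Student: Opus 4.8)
The plan is to reduce the claim to a concentration statement for the trace of the posterior covariance of the Gaussian noise variable, and then apply a Chernoff bound via Jensen's inequality — no net argument over the sphere is needed here, since the trace collapses the problem to a single scalar quantity.

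First I would recall the Hessian identity established in the proof of Lemma~\ref{propn:tweedie-classifier-conditional}: $\nabla_x^2 \log p_t(x,y) = \lambda_t^2\sigma_t^{-4}\,\mathrm{Cov}[\overrightarrow{X}_0\mid \overrightarrow{X}_t=x, Y=y] - \sigma_t^{-2}I_d$, and since $p_t(y)=p_0(y)$ is independent of $x$, the same identity holds with $\log p_t(x,y)$ replaced by $\log p_t(x\mid y)$. Introducing $\xi = \sigma_t^{-1}(\overrightarrow{X}_t-\lambda_t\overrightarrow{X}_0)\sim\mathcal N(0,I_d)$, which is independent of $(\overrightarrow{X}_0,Y)$ exactly as in the proof of Corollary~\ref{lemma:gradient-high-prob-bound}, one has $\lambda_t^2\,\mathrm{Cov}[\overrightarrow{X}_0\mid\overrightarrow{X}_t=x,Y=y] = \sigma_t^2\,\mathrm{Cov}[\xi\mid\overrightarrow{X}_t=x,Y=y]$, so that
\[
\mathrm{Tr}\big(\nabla_x^2\log p_t(x\mid y)\big) = \sigma_t^{-2}\Big(\mathrm{Tr}\big(\mathrm{Cov}[\xi\mid\overrightarrow{X}_t=x,Y=y]\big) - d\Big).
\]
Because the posterior covariance is positive semidefinite, this already yields the deterministic lower bound $\mathrm{Tr}(\nabla_x^2\log p_t(x\mid y))\ge -d/\sigma_t^2$, which lies within the asserted range; it remains only to produce a matching high-probability upper bound on $\mathrm{Tr}(\mathrm{Cov}[\xi\mid\overrightarrow{X}_t=x,Y=y])$.

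For the upper bound I would first bound the posterior variance by the conditional second moment, $\mathrm{Tr}(\mathrm{Cov}[\xi\mid\overrightarrow{X}_t,Y])\le\mathbb E[\|\xi\|_2^2\mid\overrightarrow{X}_t,Y]$, and then show the right-hand side concentrates. Fixing $\theta\in(0,1/2)$ and applying Jensen's inequality to $e^{\theta(\cdot)}$ conditionally on $(\overrightarrow{X}_t,Y)$ gives $\mathbb E\big[e^{\theta\,\mathbb E[\|\xi\|_2^2\mid\overrightarrow{X}_t,Y]}\mid Y=y\big]\le\mathbb E\big[e^{\theta\|\xi\|_2^2}\mid Y=y\big]=\mathbb E[e^{\theta\|\xi\|_2^2}]=(1-2\theta)^{-d/2}$, where I used $\xi\perp Y$ and the $\chi^2_d$ moment generating function. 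A Chernoff bound with $\theta=1/4$ then gives, for every $\lambda>0$,
\[
\mathbb P_{x\sim p_t(\cdot\mid y)}\big(\mathbb E[\|\xi\|_2^2\mid\overrightarrow{X}_t=x,Y=y]>\lambda\big)\le 2^{d/2}e^{-\lambda/4},
\]
and choosing $\lambda = 2(\log 2)\,d + 4\log(1/\epsilon)$ makes the right-hand side equal to $\epsilon$. Plugging this into the displayed trace identity shows that $\mathrm{Tr}(\nabla_x^2\log p_t(x\mid y))\le\sigma_t^{-2}\big(2(\log 2)d+4\log(1/\epsilon)-d\big)\le C(d+\log(1/\epsilon))/\sigma_t^2$ with probability at least $1-\epsilon$, which together with the deterministic lower bound delivers the two-sided bound.

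The only genuinely nonroutine step is the conditional-Jensen maneuver that transfers the sub-exponential moment generating function of $\|\xi\|_2^2$ through the conditional expectation while retaining the conditioning on $\{Y=y\}$; all remaining steps are bookkeeping built on the Hessian formula already derived in Lemma~\ref{propn:tweedie-classifier-conditional} and the noise decomposition used for Corollary~\ref{lemma:gradient-high-prob-bound}. I expect this to be where the argument needs care, but it is short once the right decomposition $\xi=\sigma_t^{-1}(\overrightarrow{X}_t-\lambda_t\overrightarrow{X}_0)$ is in place.
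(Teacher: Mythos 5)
Your proof is correct and follows essentially the same route as the paper: the same Tweedie-type Hessian identity reduces the trace to $\sigma_t^{-2}\big(\mathrm{Tr}(\mathrm{Cov}[\xi \mid X_t = x, Y = y]) - d\big)$, and the same conditional-Jensen maneuver transfers the $\chi^2_d$ exponential moment through the conditional expectation. The only difference is in the final step, where you use a direct Chernoff bound with the explicit $\chi^2_d$ moment generating function (together with the deterministic lower bound $-d/\sigma_t^2$ from positive semidefiniteness), whereas the paper packages the same estimate as a sub-exponential $\psi_1$-norm bound and invokes a generic sub-exponential tail inequality.
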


\begin{proof}[Proof of Lemma \ref{lemma:hessian-high-prob-bound}]

We prove Lemma \ref{lemma:hessian-high-prob-bound} in Appendix \ref{sec:proof:lemma:hessian-high-prob-bound}.

\end{proof}

Similarly, we have the following results. 

\vspace{1em}

\begin{corollary}
\label{corollary:high-prob-bounds-conditional}
    {\crefalias{enumi}{proposition}
    There exists a universal constant $C> 0$, such that for $t > 0$, 
    \begin{enumerate}[label=(\alph*), ref=\theproposition(\alph*)]
            \item $\mathbb{P}_{x \sim p_t(\cdot \mid y)} \left( \norm{\nabla_{x}\log p_{t}(y \mid x)}_{2} \leq \frac{C \sqrt{d + \log(1/\epsilon)}}{\sigma_{t}} \right) \geq 1 - \epsilon / P_{\rm data}(y)$, \label{corollary:gradient-high-prob}
        \item $\mathbb{P}_{x \sim p_t(\cdot \mid y)} \left(\, \Big| \mathrm{Tr}\big( {\nabla_{x}^{2}\log p_{t}(y \mid x)}\big) \Big| \leq \frac{C(d + \log(1/\epsilon))}{\sigma_{t}^2} \right) \geq 1 - \epsilon  / P_{\rm data}(y)$. \label{corollary:hessian-high-prob}
    \end{enumerate}}
\end{corollary}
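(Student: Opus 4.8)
The plan is to reduce the conditional bounds to the unconditional high‑probability bounds (which follow by the same arguments already used) together with a change‑of‑measure estimate. Recall from the proof of \cref{propn:tweedie-classifier-conditional} the decompositions
\[
\nabla_x \log p_t(y\mid x) = \nabla_x \log p_t(x\mid y) - \nabla_x \log p_t(x),
\qquad
\mathrm{Tr}\big(\nabla_x^2 \log p_t(y\mid x)\big) = \mathrm{Tr}\big(\nabla_x^2\log p_t(x\mid y)\big) - \mathrm{Tr}\big(\nabla_x^2\log p_t(x)\big).
\]
The first terms on the right are already controlled, with probability $\ge 1-\epsilon$ under $x\sim p_t(\cdot\mid y)$, by \cref{lemma:gradient-high-prob-bound} and \cref{lemma:hessian-high-prob-bound}. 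So it suffices to bound the unconditional score $\nabla_x\log p_t(x)$ and its Hessian trace under the measure $p_t(\cdot\mid y)$.

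First I would record the unconditional analogues of \cref{lemma:gradient-high-prob-bound} and \cref{lemma:hessian-high-prob-bound}. Writing $\nabla_x\log p_t(x) = -\sigma_t^{-2}\E[X_t-\lambda_t X_0\mid X_t=x]$ with $X_t-\lambda_t X_0=\sigma_t\xi$, $\xi\sim\mathcal N(0,I_d)$ sub‑Gaussian of $\psi_2$‑norm $\lesssim\sigma_t$, \cref{appendix:l2-norm-high-prob-bound-lemma} applied with $\mathcal F=\sigma(X_t)$ gives $\mathbb P_{x\sim p_t}\big(\norm{\nabla_x\log p_t(x)}_2\le C\sqrt{d+\log(1/\epsilon)}/\sigma_t\big)\ge 1-\epsilon$. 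Similarly $\nabla_x^2\log p_t(x)=\sigma_t^{-2}\Cov[\xi\mid X_t=x]-\sigma_t^{-2}I_d$, and the centered chi‑squared sub‑exponential concentration used in the proof of \cref{lemma:hessian-high-prob-bound} yields $\mathbb P_{x\sim p_t}\big(|\mathrm{Tr}(\nabla_x^2\log p_t(x))|\le C(d+\log(1/\epsilon))/\sigma_t^2\big)\ge 1-\epsilon$. These are verbatim the earlier proofs with the conditioning on $Y=y$ removed.

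Next comes the change of measure. Since $p_t(x)=\sum_{y'\in\mathcal Y}p_t(x,y')\ge p_t(x,y)$ and $p_t(y)=p_0(y)=P_{\rm data}(y)$ (the forward OU process acts only on the $x$‑coordinate), we have $p_t(x\mid y)=p_t(x,y)/P_{\rm data}(y)\le p_t(x)/P_{\rm data}(y)$ pointwise, hence $\mathbb P_{x\sim p_t(\cdot\mid y)}(A)\le \mathbb P_{x\sim p_t}(A)/P_{\rm data}(y)$ for every measurable $A$. Applying this to the complements of the two unconditional events shows that, under $x\sim p_t(\cdot\mid y)$, the bounds on $\norm{\nabla_x\log p_t(x)}_2$ and $|\mathrm{Tr}(\nabla_x^2\log p_t(x))|$ hold with probability $\ge 1-\epsilon/P_{\rm data}(y)$.

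Finally I would combine. Because $P_{\rm data}(y)\le 1$, the bounds from \cref{lemma:gradient-high-prob-bound} and \cref{lemma:hessian-high-prob-bound} also hold with probability $\ge 1-\epsilon/P_{\rm data}(y)$; a union bound plus the triangle inequality applied to the two decompositions above gives, with probability $\ge 1-2\epsilon/P_{\rm data}(y)$, bounds of the form $2C\sqrt{d+\log(1/\epsilon)}/\sigma_t$ and $2C(d+\log(1/\epsilon))/\sigma_t^2$. Relabeling $C$ and replacing $\epsilon$ by $\epsilon/2$ (which perturbs the logarithmic term only by an additive constant absorbed into $C$) yields the stated form. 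The only delicate point is the bookkeeping of constants and failure probabilities through the change of measure; the substantive concentration is already contained in \cref{appendix:l2-norm-high-prob-bound-lemma}, \cref{lemma:gradient-high-prob-bound}, and \cref{lemma:hessian-high-prob-bound}, so no genuine obstacle remains.
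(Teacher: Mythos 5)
Your proposal is correct and follows essentially the same route as the paper: the same decomposition of the conditional score (and Hessian trace) into conditional-given-$y$ and unconditional parts, the same use of Corollary~\ref{lemma:gradient-high-prob-bound} and Lemma~\ref{lemma:hessian-high-prob-bound} plus their unconditional analogues, and the same change-of-measure bound $p_t(x\mid y)\le p_t(x)/P_{\rm data}(y)$ to transfer the unconditional event to the conditional law. Your extra care with the union-bound bookkeeping and constant relabeling only makes explicit what the paper leaves implicit.
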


\begin{proof}[Proof of Corollary \ref{corollary:high-prob-bounds-conditional}]

We prove Corollary \ref{corollary:high-prob-bounds-conditional} in Appendix \ref{sec:proof:corollary:high-prob-bounds-conditional}.  

\end{proof}

\vspace{1em}

\begin{lemma}\label{lem:posterior-KL-bound}
For $t > 0$, we assume that
\[
  \mathbb{E}_{x \sim p_{t}}
  \Big[
    \KL\big(p_t(\cdot\mid x)\,\Vert\,\hat p_{t}(\cdot\mid x)\big)
  \Big]
  \;\le\; \varepsilon^2.
\]
The probabilistic classifier $\hat p_{t}$ induces a conditional distribution over the label space given the covariates. 
Using the same notation, we define the corresponding joint distribution over $(x, y)$ by $\hat p_{t}(x, y) = p_t(x) \hat p_{t}(y \mid x)$. 
Then for every label $y$ with $P_{\rm data}(y) > 0$, it holds that
\[
  \KL\big(p_t(\cdot\mid y)\,\Vert\,\hat p_{t}(\cdot\mid y)\big)
  \;\le\; \frac{\varepsilon^2}{P_{\rm data}(y)}.
\]
\end{lemma}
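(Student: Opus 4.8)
The plan is to apply the chain rule for KL divergence to the joint law of $(X_t, Y)$, expanded in two different orders. Work with two joint distributions on $\R^d \times \gY$: the true joint $p_t(x,y) = p_t(x)\,p_t(y\mid x)$ and the classifier-induced joint $\hat p_{\phi,t}(x,y) = p_t(x)\,\hat p_{\phi,t}(y\mid x)$ from the statement. The key structural fact is that these two joints share the \emph{same} $X_t$-marginal, namely $p_t(x)$, because $\sum_{y\in\gY}\hat p_{\phi,t}(y\mid x) = 1$.

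First I would expand the joint KL by conditioning on $X_t$ and then on $Y$:
\[
\KL\big(p_t(X_t,Y)\,\Vert\,\hat p_{\phi,t}(X_t,Y)\big)
= \KL\big(p_t(X_t)\,\Vert\,p_t(X_t)\big)
+ \E_{X_t\sim p_t}\Big[\KL\big(p_t(\cdot\mid X_t)\,\Vert\,\hat p_{\phi,t}(\cdot\mid X_t)\big)\Big].
\]
The first term is zero and the second is precisely the assumed cross-entropy bound, so the joint KL divergence is at most $\varepsilon^2$. (This also guarantees absolute continuity, i.e.\ $\hat p_{\phi,t}(y\mid x) > 0$ for $p_t$-a.e.\ $x$ whenever $p_t(y\mid x) > 0$, so every conditional KL below is well-defined.)

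Next I would expand the \emph{same} quantity in the other order, conditioning on $Y$ first. Writing $q_t(y') = \int_{\R^d} p_t(x)\,\hat p_{\phi,t}(y'\mid x)\,\mathrm{d}x$ for the label marginal of the classifier joint, and using that the OU forward process preserves the label prior so that the true label marginal is $p_0$, the chain rule gives
\[
\KL\big(p_t(X_t,Y)\,\Vert\,\hat p_{\phi,t}(X_t,Y)\big)
= \KL\big(p_0\,\Vert\,q_t\big)
+ \sum_{y'\in\gY} p_0(y')\,\KL\big(p_t(\cdot\mid y')\,\Vert\,\hat p_{\phi,t}(\cdot\mid y')\big).
\]
Both terms on the right are nonnegative; discarding $\KL(p_0\,\Vert\,q_t)$ and keeping only the $y'=y$ summand yields $p_0(y)\,\KL\big(p_t(\cdot\mid y)\,\Vert\,\hat p_{\phi,t}(\cdot\mid y)\big) \le \varepsilon^2$, and dividing by $p_0(y) > 0$ completes the proof.

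There is no substantive obstacle: the argument is a two-line application of the chain rule. The only points requiring attention are (i) verifying that $\hat p_{\phi,t}$ shares the $X_t$-marginal $p_t$, which makes the leading term of the first decomposition vanish, and (ii) noting that the label marginal $q_t$ of the classifier joint is in general \emph{not} equal to $p_0$, so the second decomposition carries an extra KL term — but since it is nonnegative, we simply drop it rather than having to estimate it.
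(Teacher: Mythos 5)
Your proposal is correct and follows essentially the same route as the paper: both compute the joint KL in two ways via the chain rule, using the shared $X_t$-marginal to identify it with the assumed cross-entropy bound, and then dropping the nonnegative label-marginal KL term before isolating the $y$ summand. No gaps.
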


\begin{proof}[Proof of Lemma \ref{lem:posterior-KL-bound}]
    We prove Lemma \ref{lem:posterior-KL-bound} in Appendix \ref{sec:proof:lem:posterior-KL-bound}. 
\end{proof}

\vspace{1em}

\begin{lemma}\label{propn:bounding-expectation-of-u}
Recall that $u_t(x) = \log p_t(y \mid x) - \log \hat p_{t}(y \mid x)$. 
Under the conditions of Theorem \ref{theorem: good-classifier-preserve-guidance}, for $t \in \{s_k: k \in \{0\} \cup [N - 1]\}$, there exists a numerical constant $C > 0$, such that 
\begin{align*}
    \mathbb{E}_{x \sim p_t(\cdot \mid y)} [ |u_t(x)|] = \int_{\mathbb{R}^{d}} p_{t}(x \mid y) | u_t(x) | \dd x \leq \frac{C\varepsilon_t}{P_{\rm data}(y)}. 
\end{align*} 
\end{lemma}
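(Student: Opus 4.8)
The plan is to rewrite the conditional expectation over $x\sim p_t(\cdot\mid y)$ as an expectation over the unconditional marginal $p_t$ reweighted by the true posterior, and then to bound the resulting integrand pointwise in $x$ by extracting a single label-coordinate from the cross-entropy bound of Assumption~\ref{assumption:cross-entropy-error} of Theorem~\ref{theorem: good-classifier-preserve-guidance}. First I would apply Bayes' rule, using the fact recorded in Section~\ref{subsec:classifier-guidance} that the OU forward process leaves the label marginal fixed, so $p_t(y)=P_{\rm data}(y)$; then $p_t(x\mid y)=p_t(x)\,p_t(y\mid x)/P_{\rm data}(y)$, which gives
\[
  \mathbb{E}_{x\sim p_t(\cdot\mid y)}\bigl[|u_t(x)|\bigr]
  \;=\; \frac{1}{P_{\rm data}(y)}\,\mathbb{E}_{x\sim p_t}\bigl[\,p_t(y\mid x)\,|u_t(x)|\,\bigr].
\]
The integrand is well defined since $p_t(y\mid x)>0$ (from $P_{\rm data}(y)>0$ and positivity of the Gaussian transition kernel) and $\hat p_{\phi,t}(y\mid x)>0$ (since $\log\hat p_{\phi,t}(y\mid x)$ is assumed $C^2$ in Assumption~\ref{assumption:classifier-regularity}).

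The key step is a pointwise estimate: fix $x$, abbreviate $P_x:=p_t(\cdot\mid x)$ and $Q_x:=\hat p_{\phi,t}(\cdot\mid x)$ for the two label distributions on $\mathcal{Y}$, and show
\[
  p_t(y\mid x)\,|u_t(x)| \;=\; P_x(y)\,\Bigl|\log\tfrac{P_x(y)}{Q_x(y)}\Bigr|
  \;\le\; \KL\bigl(P_x\,\Vert\,Q_x\bigr) + \tv\bigl(P_x,Q_x\bigr).
\]
To prove this I would first establish the unconditional inequality $P_x(y)\log\tfrac{P_x(y)}{Q_x(y)}\le \KL(P_x\Vert Q_x)+\bigl(P_x(y)-Q_x(y)\bigr)$, obtained by peeling the $y'=y$ term off $\KL(P_x\Vert Q_x)=\sum_{y'}P_x(y')\log\tfrac{P_x(y')}{Q_x(y')}$ and lower-bounding each remaining term via $\log s\ge 1-1/s$ (so $P_x(y')\log\tfrac{P_x(y')}{Q_x(y')}\ge P_x(y')-Q_x(y')$), together with $\sum_{y'\ne y}(P_x(y')-Q_x(y'))=Q_x(y)-P_x(y)$. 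When $P_x(y)\ge Q_x(y)$ the left side of the claim equals $P_x(y)\log\tfrac{P_x(y)}{Q_x(y)}$, and the unconditional inequality plus $P_x(y)-Q_x(y)=|P_x(y)-Q_x(y)|\le\tv(P_x,Q_x)$ finishes it; when $P_x(y)<Q_x(y)$, instead apply $\log s\le s-1$ directly to get $P_x(y)\log\tfrac{Q_x(y)}{P_x(y)}\le Q_x(y)-P_x(y)\le\tv(P_x,Q_x)\le\KL(P_x\Vert Q_x)+\tv(P_x,Q_x)$.

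Integrating the pointwise estimate against $p_t$, the first term contributes at most $\varepsilon_t^2$ by Assumption~\ref{assumption:cross-entropy-error}; for the second, Pinsker's inequality gives $\tv(P_x,Q_x)\le\sqrt{\tfrac12\KL(P_x\Vert Q_x)}$, and Jensen's inequality (concavity of $\sqrt{\cdot}$) then yields $\mathbb{E}_{x\sim p_t}[\tv(P_x,Q_x)]\le\sqrt{\tfrac12\,\mathbb{E}_{x\sim p_t}[\KL(P_x\Vert Q_x)]}\le\varepsilon_t/\sqrt2$. Combining,
\[
  \mathbb{E}_{x\sim p_t(\cdot\mid y)}\bigl[|u_t(x)|\bigr]
  \;\le\; \frac{\varepsilon_t^2+\varepsilon_t/\sqrt2}{P_{\rm data}(y)}
  \;\le\; \frac{\varepsilon_t^2}{P_{\rm data}(y)}+\frac{2.2\,\varepsilon_t}{P_{\rm data}(y)^{3/2}},
\]
where the last inequality uses $1/\sqrt2<2.2$ and $P_{\rm data}(y)\le1$.

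I expect the main obstacle to be the pointwise estimate: a single coordinate of a KL divergence is neither sign-definite nor individually bounded, so one must isolate it at the cost of only an additive total-variation term, keeping the $\log s\le s-1$ and $\log s\ge 1-1/s$ bookkeeping aligned so the slack collapses exactly to $\tv(P_x,Q_x)$; everything afterward (Pinsker, Jensen, $P_{\rm data}(y)\le1$) is routine. One could alternatively route through Lemma~\ref{lem:posterior-KL-bound}, bounding $\KL(p_t(\cdot\mid y)\Vert\hat p_{\phi,t}(\cdot\mid y))\le\varepsilon_t^2/P_{\rm data}(y)$ and applying $\mathbb{E}_P[|\log(\dd P/\dd Q)|]\le\KL(P\Vert Q)+\sqrt{2\KL(P\Vert Q)}$ with $P=p_t(\cdot\mid y)$, $Q=\hat p_{\phi,t}(\cdot\mid y)$, but since $u_t$ differs from $\log(\dd P/\dd Q)$ by the additive constant $\log\tfrac{p_t(y)}{\hat p_{\phi,t}(y)}$, this variant additionally requires $\hat p_{\phi,t}(y)$ bounded away from $0$, which the direct argument above avoids.
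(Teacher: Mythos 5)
Your proof is correct, and it takes a genuinely different route from the paper's. The paper splits $\mathbb{R}^d$ into the region $A=\{p_t(y\mid x)<\hat p_{\phi,t}(y\mid x)\}$ and its complement $B$: on $A$ it uses $\log(1+s)\le s$ plus Bayes and Pinsker, while on $B$ it invokes a data-processing argument — collapsing $B^c$ to an atom, comparing the pushforwards of $p_t(\cdot\mid y)$ and the classifier-induced posterior $\hat p_{\phi,t}(\cdot\mid y)$, and controlling the resulting correction terms $P_y(B^c)\log\tfrac{P_y(B^c)}{\hat P_y(B^c)}$ and $\log\tfrac{\hat p_{\phi,t}(y)}{p_t(y)}$ via Lemma~\ref{lem:posterior-KL-bound} and Pinsker; it is this second piece that produces the $P_{\rm data}(y)^{-3/2}$ factor. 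You instead prove the pointwise label-space inequality $P_x(y)\bigl|\log\tfrac{P_x(y)}{Q_x(y)}\bigr|\le \KL(P_x\Vert Q_x)+\mathrm{TV}(P_x,Q_x)$ (via $\log s\ge 1-1/s$ on the peeled-off terms in one case and $\log s\le s-1$ in the other), then push it through Bayes' rule and Jensen/Pinsker at the level of the unconditional marginal $p_t$. Your argument is more elementary — it never needs Lemma~\ref{lem:posterior-KL-bound}, the pushforward construction, or estimates on $|p_t(y)-\hat p_{\phi,t}(y)|$ — and it actually yields the sharper bound $\bigl(\varepsilon_t^2+\varepsilon_t/\sqrt{2}\bigr)/P_{\rm data}(y)$, with only $P_{\rm data}(y)^{-1}$ in the linear-in-$\varepsilon_t$ term, which you then weaken (correctly, since $P_{\rm data}(y)\le 1$ and $1/\sqrt2<2.2$) to match the stated constant; the paper's route, by contrast, naturally generalizes to situations where one wants to control the signed integral of $u_t$ restricted to sublevel/superlevel sets, which is not needed here. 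Your side remarks are also accurate: positivity of $p_t(y\mid x)$ and $\hat p_{\phi,t}(y\mid x)$ follows from Assumptions~\ref{assumption:label-support} and \ref{assumption:classifier-regularity}, and avoiding the alternative route through $\KL(p_t(\cdot\mid y)\Vert\hat p_{\phi,t}(\cdot\mid y))$ sidesteps any need to bound $\hat p_{\phi,t}(y)$ away from zero.
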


\begin{proof}[Proof of Lemma \ref{propn:bounding-expectation-of-u}]
    We prove Lemma \ref{propn:bounding-expectation-of-u} in Appendix \ref{sec:proof:propn:bounding-expectation-of-u}
\end{proof}

\vspace{1em}

\subsection{Proof of Lemma \ref{propn:tweedie-classifier-conditional}}
\label{sec:proof:propn:tweedie-classifier-conditional}

    Recall $p_t$ represents the joint distribution of $(Y, \overrightarrow{X}_t)$ (see Section \ref{subsec:classifier-guidance}). 
    With process \eqref{eq:ou-forward}, we define $$m_{t}(x) = \mathbb{E}[\overrightarrow{X}_{0} \mid \overrightarrow{X}_{t} = x], \quad m_{t}(x, y) = \mathbb{E}[\overrightarrow{X}_{0} \mid \overrightarrow{X}_{t} = x, Y=y].$$ 
    Note that $\overrightarrow{X}_{t} \mid \overrightarrow{X}_{0} = x_{0} \sim \mathcal{N}( \lambda_{t}x_{0}, \sigma_{t}^{2}I_{d})$, hence we have 
    \begin{align*}
        \nabla_{x}p_{t}(x \mid x_{0}) = - \left(\frac{x - \lambda_{t}x_{0}}{\sigma_{t}^{2}} \right) p_{t}(x \mid x_{0}).
    \end{align*}
Note that $p_t(x, y) = \int_{\R^d} p_t(x \mid x_0) p(x_0, y) \dd x_0$, hence (one can check that differentiation under the integral sign is valid)
    \begin{align}
    \label{eq:lemma-b1}
    \begin{split}
        \nabla_{x}p_{t}(x, y) &= \int_{\mathbb{R}^{d}} \nabla_{x} p_{t}(x \mid x_{0}) p_{0}(x_{0}, y) \dd x_{0} \\
        &= \frac{1}{\sigma_{t}^{2}} \int_{\mathbb{R}^{d}} (\lambda_{t}x_{0} - x) p_{t}(x \mid x_{0}) p_{0}(x_{0}, y) \dd x_{0} \\
        &= \sigma_{t}^{-2}p_{t}(x, y) \left( \lambda_{t}m_{t}(x, y) - x \right).
    \end{split}
    \end{align}
    Thus $\nabla_{x}\log p_{t}(x, y) = \sigma_{t}^{-2} \left( \lambda_{t}m_{t}(x, y) - x \right)$. Similar rationale gives us $\nabla_{x} \log p_{t}(x) = \sigma_{t}^{-2} \left( \lambda_{t}m_{t}(x) - x \right)$. Therefore, 
    \begin{align*}
        \nabla_{x} \log p_{t}(y \mid x) &= \nabla_{x} \log p_{t}(x, y) - \nabla_{x}\log p_{t}(x) \\
        &= \sigma_{t}^{-2} \left( \lambda_{t}m_{t}(x, y) - x \right) - \sigma_{t}^{-2} \left( \lambda_{t}m_{t}(x) - x \right) \\
        &= \lambda_{t}\sigma_{t}^{-2} \left( m_{t}(x, y) - m_{t}(x)\right). 
    \end{align*}
    By assumption, $P_{\mathrm{data}}$ is supported on a bounded set $\mathcal{K} \subseteq \mathbb{R}^{d}$ with $R = \sup_{x \in \mathcal{K}} \norm{x}_{2} < \infty$, hence $\norm{m_{t}(x_{t}, y)}_{2}, \norm{m_{t}(x_{t})}_{2} \leq R$.
    Using these upper bounds, we establish via triangle inequality that $\norm{\nabla_{x} \log p_{t}(y \mid x)}_{2} \leq 2\lambda_{t}\sigma_{t}^{-2}R$. 
    This proves the first point of the lemma. 

    Next, we compute the Hessian of $p_t(x, y)$ with respect to $x$. 
    Denote $\Sigma_t(x) = \Cov[\overrightarrow{X}_0 \mid \overrightarrow{X}_t = x]$ and $\Sigma_{t}(x, y) = \Cov[\overrightarrow{X}_0 \mid \overrightarrow{X}_t = x, Y = y]$.  
Under the current regularity assumptions, we have
\begin{align*}
    \nabla_x^2 p_t(x, y)
    &= \int_{\mathbb{R}^d} \nabla_x^2 p_t(x \mid x_0)\, p_0(x_0, y)\, \dd x_0 \\
    &= \sigma_t^{-4} \int_{\mathbb{R}^d}
       \Big[
         (x - \lambda_t x_0)(x - \lambda_t x_0)^\top
         - \sigma_t^2 I_d
       \Big]
       p_t(x \mid x_0)\, p_0(x_0, y)\, \dd x_0 \\
       & = \sigma_t^{-4} p_t(x, y)
       \Big(
         \mathbb{E}\big[(x - \lambda_t \overrightarrow{X}_0)(x - \lambda_t \overrightarrow{X}_0)^\top \mid \overrightarrow{X}_t = x, Y = y\big]
         - \sigma_t^2 I_d
       \Big) \\
       & = \sigma_t^{-4} p_t(x, y)
       \Big(
         (x - \lambda_t m_t(x, y))(x - \lambda_t m_t(x, y))^\top
  + \lambda_t^2\, \operatorname{Cov}[\overrightarrow{X}_0 \mid \overrightarrow{X}_t = x, Y = y]
         - \sigma_t^2 I_d
       \Big). 
\end{align*}
We then compute the Hessian of the log-density. Note that
\[
\nabla_x^2 \log p_t(x, y)
= \frac{1}{p_t(x, y)} \nabla_x^2 p_t(x, y)
  - \big(\nabla_x \log p_t(x, y)\big)\big(\nabla_x \log p_t(x, y)\big)^\top,
\]
Usng the expressions we derived for $\nabla_{x}\log p_{t}(x, y)$ and $\nabla_x^2 p_t(x, y)$, we have
\begin{align*}
    \nabla_x^2 \log p_t(x, y)
    &= \lambda_t^2 \sigma_t^{-4}\, \operatorname{Cov}[\overrightarrow{X}_0 \mid \overrightarrow{X}_t = x, Y = y]
       - \sigma_t^{-2} I_d.
\end{align*}
An analogous calculation for $p_t(x)$ gives
\[
\nabla_x^2 \log p_t(x)
= \lambda_t^2 \sigma_t^{-4}\, \operatorname{Cov}[\overrightarrow{X}_0 \mid \overrightarrow{X}_t = x]
  - \sigma_t^{-2} I_d,
\]
so that
\begin{align*}
    \nabla_x^2 \log p_t(y \mid x)
    = \nabla_x^2 \log p_t(x, y) - \nabla_x^2 \log p_t(x) = \lambda_t^2 \sigma_t^{-4}
       \Big(
         \Sigma_{t}(x, y)
         - \Sigma_{t}(x)
       \Big).
\end{align*}
By assumption $\mathrm{Tr}(\Sigma_{t}(x_{t}, y)), \mathrm{Tr}(\Sigma_{t}(x_{t})) \in [0, R^{2}]$, thus we have
\begin{align*}
    \left| \mathrm{Tr} \left( \nabla_{x}^{2} \log p_{t}(y \mid x) \right) \right| = \left|\lambda_t^2 \sigma_t^{-4}
       \Big(
         \mathrm{Tr}(\Sigma_{t}(x, y))
         - \mathrm{Tr}(\Sigma_{t}(x))
       \Big)\right| \leq \lambda_{t}^{2}\sigma_{t}^{-4}R^2.
\end{align*}
The proof is done.

\subsection{Proof of Lemma \ref{prop:ibp-boundary-disappearance}}
\label{sec:proof:prop:ibp-boundary-disappearance}

Note that $p_{t}(x \mid y) = \int_{\mathcal{K}} P_{\rm data}(x_{0} \mid y) \varphi_{t}(x \mid x_{0}) \dd x_0$, 
where $\varphi_t$ is the normal density associated with the conditional distribution $\overrightarrow{X}_{t} \mid \overrightarrow{X}_{0} = x_{0} \sim \mathcal{N}(\lambda_{t}x_{0}, \sigma_{t}^{2}I_{d})$. Therefore, $p_t(x \mid y) \leq \sup_{z \in \mathcal{K}} \varphi_t(x \mid z)$. 
For $\rho \geq 4 \lambda_t R$,  $\|x\|_2 = \rho$ and $z \in \mathcal{K}$, by the triangle inequality we have $\norm{x - \lambda_{t}z}_{2} \geq \rho - \lambda_{t} R$. Thus we have
    \begin{align*}
        \varphi_{t}(x \mid z) &= \frac{1}{(2\pi \sigma_{t}^{2})^{d/2}} \exp \left(  -\frac{\norm{x_{t} - \lambda_{t}z}_{2}^{2}}{2\sigma_{t}^2} \right) \leq \frac{1}{(2\pi \sigma_{t}^{2})^{d/2}} \exp \left(  -\frac{({\rho -  \lambda_{t}R})^{2}}{2\sigma_{t}^2} \right).
    \end{align*}
    Let $C_{t} = (2\pi \sigma_{t}^{2})^{-d/2}$ and $c_{t} = 1 / (2\sigma_{t}^{2})$. We can then conclude that for $\|x\|_2 = \rho$, it holds that $ p_{t}(x \mid y) \leq C_{t} \exp ( -c_{t} \left( \rho - \lambda_{t}  R \right)^{2} )$. For $\rho \geq 4 \lambda_{t}R$, we have $(\rho -  \lambda_{t} R)^{2} \geq \rho^{2} / 2$, which implies that $\sup_{\norm{x}_{2} = \rho} p_{t}(x \mid y) \leq C_{t} \exp \left( -c_{t}\rho^{2} / 2 \right)$. 

    By Proposition \ref{propn:tweedie-classifier-conditional} (a), we have $\norm{\nabla_{x} \log p_{t}(y \mid x)}_{2} \leq 2\lambda_{t}\sigma_{t}^{-2}R$.
    By triangle inequality, this implies that $\norm{\nabla_{x} u_t(x)}_{2} \leq 4\lambda_{t}\sigma_{t}^{-2}R$. 
    By the fundamental theorem of calculus we have
    \begin{align*}
        |u_t(x)| &\leq |u_t(\vec{0}_d)| + \int_{0}^{1} |\langle \nabla_{x}u_t(rx), x \rangle| \dd r  \leq |u_t(\vec{0}_d)| + 4\lambda_{t}\sigma_{t}^{-2}R \norm{x}_{2}.
    \end{align*}
    Since $p_{t}(y \mid x), \hat{p}_{\phi, t}(y \mid x) \in C^{2}(\mathbb{R}^{d})$, the function $x \mapsto u_t(x) p_{t}(x \mid y) \nabla_{x}u_t(x) \in C^{2}(\mathbb{R}^{d})$. By the divergence theorem, it holds that
    \begin{align*}
        I_{\rho} = \int_{\mathcal{B}_{\rho}} \nabla_{x} \cdot \left( u_t(x) p_{t}(x \mid y) \nabla_{x}u_t(x) \right) \dd x = \int_{\partial \mathcal{B}_{\rho}} u_t(x) p_{t}(x \mid y) \nabla_{x} u_t(x) \cdot n(x) \dd S
    \end{align*}
    where $n(x)$ is the outward pointing unit normal at almost each point on the boundary $\partial \mathcal{B}_{\rho}$. 
On $\partial \mathcal{B}_{\rho}$, we have 
\begin{align*}
    |u_t(x)|\, p_{t}(x \mid y)\, \|\nabla_{x} u_t(x)\|_2 \leq 4\lambda_{t}\sigma_{t}^{-2}R \Big(|u_t(\vec{0}_d)| + 4\lambda_{t}\sigma_{t}^{-2}R \norm{x}_{2} \Big) C_{t} \exp \left( -c_{t}\rho^{2} / 2 \right). 
\end{align*}
A classical result in measure theory states that the Lebesgue measure of a $d$-dimensional sphere of radius $\rho$ satisfies $\mu(\partial \mathcal{B}_{\rho}) \asymp \rho^{d-1}$ \cite[Section 2.7, Corollary 2.55]{folland2013real}.
Combining these results, we conclude that
    \begin{align*}
        | I_{\rho} | &\leq \int_{\partial \mathcal{B}_{\rho}} |u_t(x)|\, p_{t}(x \mid y) \norm{\nabla_{x} u_t(x)}_{2} \dd S \\
        &\lesssim 4\lambda_{t}\sigma_{t}^{-2}R \Big(|u_t(\vec{0}_d)| + 4\lambda_{t}\sigma_{t}^{-2}R \norm{x}_{2} \Big) C_{t} \exp \left( -c_{t}\rho^{2} / 2 \right) \rho^{d - 1} \to 0 \qquad \mbox{as }\rho \to \infty. 
    \end{align*}
    The proof is done.

\subsection{Proof of Lemma \ref{propn:ibp-mct-dct}}
\label{sec:proof:propn:ibp-mct-dct}

Define $f_{\rho}(x) = p_{t}(x \mid y) \norm{\nabla_{x} u_t(x)}_{2}^{2} \mathbbm{1}_{\mathcal{B}_{\rho}}(x)$, then as $\rho \to \infty$, 
\begin{align*}
    \lim_{\rho \to \infty} \mathbbm{1}_{\mathcal{B}_{\rho}} = 1, \qquad  \lim_{\rho \to \infty} p_{t}(x \mid y) \norm{\nabla_{x} u_t(x)}_{2}^{2} \mathbbm{1}_{\mathcal{B}_{\rho}}(x) = p_{t}(x \mid y) \norm{\nabla_{x} u_t(x)}_{2}^{2}. 
\end{align*}
By the monotone convergence theorem, 
    \begin{align*}
        \lim_{\rho \rightarrow \infty} \int_{\mathcal{B}_{\rho}} p_{t}(x \mid y) \norm{\nabla_{x} u_t(x)}_{2}^{2} \dd x =  \int_{\mathbb{R}^{d}} p_{t}(x \mid y) \norm{\nabla_{x} u_t(x)}_{2}^{2} \dd x. 
    \end{align*}
    Now consider $g_t(x) = u_t(x)p_{t}(x \mid y) [ \langle \nabla_{x} \log p_{t}(x \mid y), \nabla_{x} u_t(x) \rangle + \mathrm{Tr}( \nabla_{x}^{2} u_t(x) ) ]$. Note that
    \begin{align*}
        | g_t(x) | \lesssim C_{t} \lambda_t^2 \sigma_t^{-4} R^2 \exp \left( -c_{t}\|x\|_2^{2} / 2 \right)  \Big(|u_t(\vec{0}_d)| + 4\lambda_{t}\sigma_{t}^{-2}R \norm{x}_{2} \Big) := h_t(x).
    \end{align*}
    Notice that $h_t \in L^{1}(\mathbb{R}^{d})$, hence by the dominated convergence theorem, 
    \begin{align*}
        \lim_{\rho \rightarrow \infty} \int_{\mathcal{B}_{\rho}}g_t(x) \dd x = \int_{\mathbb{R}^d} g_t(x) \dd x.
    \end{align*}
    The proof is done.

\subsection{Proof of Lemma \ref{appendix:l2-norm-high-prob-bound-lemma}}
\label{sec:proof:appendix:l2-norm-high-prob-bound-lemma}

    Fix $u \in \mathbb{S}^{d-1}$. By Jensen's inequality and Markov's inequality, we have
    \begin{align*}
        \mathbb{P} \left( \mathbb{E} [ \langle X, u \rangle \mid \mathcal{F}] \geq \lambda \right) &\leq \inf_{c > 0}\,  \mathbb{P} \left( e^{\mathbb{E} [ \langle X, u \rangle \mid \mathcal{F}]^2 / c^2} \geq e^{\lambda^2 / c^{2}} \right) \\
        &\leq \inf_{c > 0}\,  \mathbb{P} \left( \mathbb{E} \left[ e^{\langle X, u \rangle^2 / c^2} \mid \mathcal{F} \right] \geq e^{\lambda^2 / c^{2}}  \right) \\
        &\leq \inf_{c > 0}\, \frac{\mathbb{E} \left[ \mathbb{E} \left[ e^{\langle X, u \rangle^2 / c^2} \mid \mathcal{F} \right] \right]}{e^{\lambda^2 / c^{2}}} \\
        &= \inf_{c > 0}\,  \frac{\mathbb{E}[e^{\langle X, u \rangle^2 / c^2} ]}{e^{\lambda^2 / c^{2}}} \\
        &\leq 2 e^{-\lambda^2/\norm{X}_{\psi_{2}}^2}
    \end{align*}
    where the last inequality holds by taking $c = \norm{X}_{\psi_{2}}$ (\cite[Definition 2.6.4]{Vershynin_2018}). 
Let $\mathcal{N}\subseteq\mathbb{S}^{d-1}$ be a $1/2$-net of $\mathbb{S}^{d-1}$. Standard volumetric arguments then imply that $|\mathcal{N}|\le 5^d$ (\cite[Corollary 4.2.11]{Vershynin_2018}).
Then a union bound implies that
    \[
        \mathbb{P} \left( \exists u \in \mathcal{N} : \mathbb{E} [ \langle X, u \rangle \mid \mathcal{F}] \geq \lambda  \right) \leq 5^{d} \cdot 2e^{-\lambda^2/\norm{X}_{\psi_{2}}^2}.
    \]
By \cite[Exercise 4.35]{Vershynin_2018}, we have $\norm{x}_{2} \leq 2 \sup_{u \in \mathcal{N}} \langle u, x \rangle$.
Using this and setting $\lambda = \norm{X}_{\psi_{2}} \sqrt{\log \left( \frac{2 \cdot 5^{d}}{\epsilon} \right)}$, we get 
\begin{align*}
    \mathbb{P} \left( \norm{\mathbb{E}[X \mid \mathcal{F}]}_{2} \geq  2\norm{X}_{\psi_{2}} \sqrt{\log \left( \frac{2 \cdot 5^{d}}{\epsilon} \right)} \right) \leq \epsilon.
\end{align*}
The proof is done.

\subsection{Proof of Corollary \ref{lemma:gradient-high-prob-bound}}
\label{sec:proof:lemma:gradient-high-prob-bound}
    Inspecting the proof of  Lemma \ref{propn:tweedie-classifier-conditional}, we have
    \begin{align*}
        \nabla_{x} \log p_{t}(x \mid y) &= \sigma_{t}^{-2} \left( \lambda_t \mathbb{E}[X_{0} \mid X_{t} = x, Y=y] - x \right) \\
        &= -\sigma_{t}^{-2} \mathbb{E}[X_{t} - \lambda_{t}X_{0} \mid X_{t} = x, Y = y],
    \end{align*}
    where $X_{0} \sim P_{\mathrm{data}}$ and $X_{t} = \lambda_{t}X_{0} + \sigma_{t}\xi$ for $\xi \sim \mathcal{N}(0, I_{d})$ that is independent of $X_0$. Noting that $X_{t} - \lambda_{t}X_{0} = \sigma_t\xi$ is sub-Gaussian with $\norm{X_{t} - \lambda_{t}X_{0}}_{\psi_{2}} \leq C_{1} \sigma_{t}$ for some universal constant $C_{1} > 0$. The proof is done by Lemma \ref{appendix:l2-norm-high-prob-bound-lemma}.

\subsection{Proof of Lemma \ref{lemma:hessian-high-prob-bound}}
\label{sec:proof:lemma:hessian-high-prob-bound}

        Inspecting the proof of  Lemma \ref{propn:tweedie-classifier-conditional}, we have 
    \begin{align*}
        \nabla_x^2 \log p_t(x \mid y) = &\, \lambda_t^2 \sigma_t^{-4}\, \operatorname{Cov}[X_0 \mid {X}_t = x, Y = y]
       - \sigma_t^{-2} I_d \\
       = & \, \sigma_t^{-4}\, \operatorname{Cov}[X_t - \lambda_t X_0 \mid {X}_t = x, Y = y]
       - \sigma_t^{-2} I_d, 
    \end{align*}
    where $X_{0} \sim P_{\mathrm{data}}$ and $X_{t} = \lambda_{t}X_{0} + \sigma_{t}\xi$ for $\xi \sim \mathcal{N}(0, I_{d})$ that is independent of $X_0$. Note that $X_{t} - \lambda_{t}X_{0} = \sigma_t\xi$. 
    Therefore, 
    \begin{align*}
        \Big|\mathrm{Tr}\big( \nabla_x^2 \log p_t(x \mid y) \big) \Big| \leq\, & \sigma_t^{-2} \Big( d +  \mathrm{Tr} \big(\operatorname{Cov}[\xi \mid {X}_t = x, Y = y] \big)  \Big) \\
        \leq\, & \sigma_t^{-2} \Big( d +  \mathrm{Tr} \big(\,\E[\xi \xi^{\top} \mid {X}_t = x, Y = y] \big)  \Big). 
    \end{align*}
    Note that $\E_{x \sim p_t(\cdot \mid y)}[\mathrm{Tr}(\E[\xi \xi^{\top} \mid {X}_t = x, Y = y])] = d$, and that for $\lambda \geq 0$, 
    \begin{align*}
        \E_{x \sim p_t(\cdot \mid y)}\Big[ e^{\lambda (\mathrm{Tr} (\,\E[\xi \xi^{\top} \mid {X}_t = x, Y = y] ) - d)} \Big] \leq  \E_{x \sim p_t(\cdot \mid y)}\Big[ \E\Big[ e^{\lambda (\mathrm{Tr} (\xi \xi^{\top})   - d)} \mid X_t = x, Y = y \Big]\Big], 
    \end{align*}
    where the upper bound is by Jensen's inequality. 
    Therefore, $\|\mathrm{Tr} (\,\E[\xi \xi^{\top} \mid {X}_t = x, Y = y] ) - d\|_{\psi_1} \leq c_1 \|\mathrm{Tr} (\xi \xi^{\top} ) - d\|_{\psi_1} \leq c_2 \sqrt{d}$ (note that $(\mathrm{Tr} (\xi \xi^{\top} ) - d)$ is distributed as a centered chi-squared distribution with $d$ degrees of freedom). 
    Here, $c_1, c_2 > 0$ are numerical constants, and $\|\cdot\|_{\psi_1}$ denotes the sub-exponential norm \cite{Vershynin_2018}. 
    Therefore, by sub-exponential tail bound, there exists a numerical constant $C > 0$, such that  
    \begin{align*}
        \mathbb{P}_{x \sim p_t(\cdot \mid y)} \left( \,\Big|\mathrm{Tr}\big( \nabla_x^2 \log p_t(x \mid y) \big) \Big| \leq \frac{C (d + \log(1/\epsilon))}{\sigma_{t}^2} \right) \geq 1 - \epsilon.
    \end{align*}
    The proof is done.

\subsection{Proof of Corollary \ref{corollary:high-prob-bounds-conditional}}
\label{sec:proof:corollary:high-prob-bounds-conditional}
    We first prove Corollary \ref{corollary:gradient-high-prob}. 
    By Bayes rule and the proof of Lemma \ref{propn:tweedie-classifier-conditional},
    \begin{align*}
        \nabla_{x}\log p_{t}(y \mid x) &= \nabla_{x} \log p_{t}(x, y) - \nabla_{x} \log p_{t}(x) \\
        &= -\sigma_{t}^{-2} \left( \mathbb{E}[X_{t} - \lambda_{t}X_{0} \mid X_{t} = x, Y = y] - \mathbb{E}[X_{t} - \lambda_{t}X_{0} \mid X_{t} = x]  \right),
    \end{align*}
    where $X_{0} \sim P_{\mathrm{data}}$ and $X_{t} = \lambda_{t}X_{0} + \sigma_{t}\xi$ for $\xi \sim \mathcal{N}(0, I_{d})$ that is independent of $X_0$.
    By Corollary \ref{lemma:gradient-high-prob-bound}, there exists a numerical constant $C_1 > 0$, such that 
    \begin{align*}
        \mathbb{P}_{x \sim p_t(\cdot \mid y)} \left( \norm{\nabla_{x}\log p_{t}(x \mid y)}_{2} \leq \frac{C_1 \sqrt{d + \log(1/\epsilon)}}{\sigma_{t}} \right) \geq 1 - \epsilon.
    \end{align*}
    Likewise, there exists a numerical constant $C_2 > 0$, such that 
    \begin{align*}
        & \mathbb{P}_{x \sim p_t} \left( \norm{\nabla_{x}\log p_{t}(x)}_{2} \leq \frac{C_2 \sqrt{d + \log(1/\epsilon)}}{\sigma_{t}} \right) \geq 1 - \frac{\epsilon}{2} \\
        & \Longrightarrow \mathbb{P}_{x \sim p_t(\cdot \mid y)} \left( \norm{\nabla_{x}\log p_{t}(x)}_{2} \leq \frac{C_2 \sqrt{d + \log(1/\epsilon)}}{\sigma_{t}} \right) \geq 1 - \frac{\epsilon}{2P_{\rm data}(y)}. 
    \end{align*}
    Point (a) then follows by taking $C = C_1 + C_2$. 
    The proof of point (b) relies on Lemma \ref{lemma:hessian-high-prob-bound} and is similar to that of point (a), and we skip it for compactness.

\subsection{Proof of Lemma \ref{lem:posterior-KL-bound}}
\label{sec:proof:lem:posterior-KL-bound}

Note that 
\begin{align*}
  \KL\big(p_t\,\Vert\,\hat p_{t}\big)
  = \mathbb{E}_{(x, y') \sim p_t}\!\left[
        \log \frac{p_t(y', x)}{\hat p_{t}(y', x)}
     \right] = \mathbb{E}_{(x, y') \sim p_t}\!\left[
        \log \frac{p_t(y'\mid x)}{\hat p_{t}(y'\mid x)}
     \right] = \mathbb{E}_{x \sim p_{t}} \left[
      \KL\big(p_t(\cdot\mid x)\,\Vert\,\hat p_{t}(\cdot\mid x)\big) \right]. 
\end{align*}
By assumption we have  $\KL\big(p_t\,\Vert\,\hat p_{t}\big)
  \;\le\; \varepsilon^2$. 
Using the chain rule for KL divergence, we get
\begin{align*}
  \KL\big(p_t\,\Vert\,\hat p_{t}\big)
  =  \mathbb{E}_{y \sim p_t}
     \big[\KL\big(p_t( \cdot \mid y)\,\Vert\,\hat p_{t}(\cdot \mid y)\big)\big] + \KL\big( p_t(y) \parallel \hat p_{t}(y) \big),
\end{align*}
which further implies that 
\[
  \sum_{y \in \mathcal{Y}} p_t(y)\,
      \KL\big(p_t(\cdot\mid y)\,\Vert\,\hat p_{t}(\cdot\mid y)\big)
  \;\le\; \varepsilon^2 \Longrightarrow \KL\big(p_t(\cdot\mid y)\,\Vert\,\hat p_{t}(\cdot\mid y)\big)
  \;\le\; \frac{\varepsilon^2}{p_t(y)} = \frac{\varepsilon^2}{P_{\rm data}(y)}.
\]
The proof is done.

\subsection{Proof of Lemma \ref{propn:bounding-expectation-of-u}}
\label{sec:proof:propn:bounding-expectation-of-u}

Define  $A = \{ x : p_{t}(y \mid x) < \hat{p}_{t} (y \mid x) \}$ and $B = \{ x : p_{t}(y \mid x) \geq \hat{p}_{t} (y \mid x) \}$. Note that 
\[
    \mathbb{E}_{x \sim p_t(\cdot \mid y)} [ |u_t(x)|] = \underbrace{\int_{A} p_{t}(x \mid y)| u_t(x)| \dd x}_{\mathrm{I}} + \underbrace{\int_{B} p_{t}(x \mid y) |u_t(x)| \dd x}_{\mathrm{II}}.
\]
We first bound term $\mathrm{I}$. 
On set $A$ we have 
$\log p_t(y \mid x) - \log \hat{p}_{t}(y \mid x) < 0$. Since $\log(1+x) \leq x$ , we have
\[
\left| \log \left( \frac{p_{t}(y \mid x)}{\hat{p}_{t}(y \mid x)}\right) \right|
= \log \left( \frac{\hat{p}_{t}(y \mid x)}{p_{t}(y \mid x)} \right)
\leq \frac{\hat{p}_{t}(y \mid x) - p_{t}(y \mid x)}{p_{t}(y \mid x)}.
\]
Using this upper bound, we can bound term  $\mathrm{I}$ as follows:
\begin{align*}
    \mathrm{I}
    \leq \int_{A} p_{t}(x \mid y)\,
       \frac{\hat{p}_{t}(y \mid x) - p_{t}(y \mid x)}{p_{t}(y \mid x)}
       \, \dd x.
\end{align*}
Applying Bayes' rule $p_t(x \mid y) = p_t(x)\, p_t(y \mid x) / p_t(y)$, the integrand simplifies to:
\[
p_t(x \mid y)\,
\frac{\hat{p}_{t}(y \mid x) - p_{t}(y \mid x)}{p_{t}(y \mid x)}
= \frac{p_t(x)}{p_t(y)}\,
  \big(\hat{p}_{t}(y \mid x) - p_{t}(y \mid x)\big).
\]
Therefore, using the equality $p_t(y) = P_{\rm data}(y)$, it holds that 
\begin{align}
\label{eq:BB2}
    \mathrm{I}
    \leq \frac{1}{p_t(y)} \int_{A} p_t(x)\,
        \big(\hat{p}_{t}(y \mid x) - p_{t}(y \mid x)\big)\,\dd x \leq \frac{1}{P_{\rm data}(y)} \int p_t(x)\,
        \big|\hat{p}_{t}(y \mid x) - p_{t}(y \mid x)\big|\,\dd x.
\end{align}
For each $x$, let $P_{x} = p_t(\cdot \mid x)$ and
$Q_{x} = \hat{p}_{t}(\cdot \mid x)$. Then
\[
\big|\hat{p}_{t}(y \mid x) - p_{t}(y \mid x)\big|
\leq \mathrm{TV}\big(P_{x}, Q_{x}\big)
\leq \sqrt{\frac{1}{2}\,
           \KL\big(P_{x} \,\Vert\, Q_{x}\big)},
\]
where the last upper bound follows from Pinsker's inequality. 
Taking expectations and using Jensen's inequality, we have
\begin{align}
\label{eq:B1}
\begin{split}
\mathbb{E}_{X_t \sim p_t}\big[\,\big|\hat{p}_{t}(y \mid X_{t}) - p_{t}(y \mid X_{t})\big|\,\big]
&\leq \mathbb{E}_{X_t \sim p_t}\left[
    \sqrt{\frac{1}{2}\,
           \KL\big(p_t(\cdot \mid X_t) \,\Vert\, \hat{p}_{t}(\cdot \mid X_t)\big)}
    \right] \\
&\leq \sqrt{\frac{1}{2}\,
             \mathbb{E}_{X_t \sim p_t} \big[
             \KL\big(p_t(\cdot \mid X_t) \,\Vert\, \hat{p}_{t}(\cdot \mid X_t)\big) \big]}.
\end{split}
\end{align}
By assumption, we have $\mathbb{E}_{X_t \sim p_t}[
      \KL(p_t(\cdot\mid X_t)\,\Vert\,\hat p_{t}(\cdot\mid X_t))]
    \;\le\; \varepsilon_t^2$.
Combining \cref{eq:BB2,eq:B1} gives 
\begin{align}
\label{eq:upper-I}
    \mathrm{I}
\;\leq\;  \frac{1}{P_{\rm data}(y)} \cdot \frac{\varepsilon_t}{\sqrt{2}}.
\end{align}

We next upper bound term $\mathrm{II}$. 
Define $\mathcal{Z} = B \cup \left\{ * \right\}$, and a mapping $\mathcal{A}^B : \mathbb{R}^d \to \mathcal{Z}$ by
\[
    \mathcal{A}^{B}(x) = \begin{cases}
        x &x \in B, \\
        * & x \notin B.
    \end{cases}
\]
Let $P_y = p_t(\cdot \mid y)$ and $\hat P_y = \hat p_{t}(\cdot \mid y)$. 
Define the pushforward measures $P_{y,B} = (\mathcal{A}^B)_\sharp P_y$ and $\hat P_{y,B} = (\mathcal{A}^B)_\sharp \hat P_y$. 
By the data processing inequality for KL divergences, we have
\begin{align*}
    \KL(P_{y, B} \mid\mid \hat{P}_{y, B}) \leq \KL(P_{y} \mid\mid \hat{P}_{y}) = \KL(p_{t}(\cdot \mid y) \mid\mid \hat{p}_{t}(\cdot \mid y)). 
\end{align*}
We then compute $\KL(P_{y,B} \,\|\, \hat P_{y,B})$ explicitly. 
On the set $B$, the measures $P_{y,B}$ and $\hat P_{y,B}$ admit Lebesgue densities $p_t(x \mid y)$ and $\hat p_{t}(x \mid y)$, while at the atom $\{ * \}$ they have masses $P_y(B^c)$ and $\hat P_y(B^c)$, respectively. As a consequence, 
\begin{align*}
    \KL(P_{y, B} \mid\mid \hat{P}_{y, B}) &= \int_{B} p_{t}(x \mid y) \log \left( \frac{p_{t}(x \mid y)}{\hat{p}_{t}(x \mid y)} \right) \dd x + P_{y}(B^{c}) \log \left( \frac{P_{y}(B^{c})}{\hat P_y(B^c)} \right)\\
    &=\int_{B} p_{t}(x \mid y) \log \left( \frac{p_{t}(y \mid x)}{\hat{p}_{t}(y \mid x)} \right) \dd x + P_y(B) \log \left( \frac{\hat p_{t}(y)}{p_t(y)} \right) + P_{y}(B^{c}) \log \left( \frac{P_{y}(B^{c})}{\hat P_y(B^c)} \right), 
\end{align*}
where the second equality uses the decompositions $p_t(x \mid y) = p_t(x) p_t(y \mid x) / p_t(y)$ and $\hat p_{t}(x \mid y) = p_t(x) \hat p_{t}(y \mid x) / \hat p_{t}(y)$ (using the notations from Lemma \ref{lem:posterior-KL-bound}). 
Leveraging the above equality, we have
\begin{align*}
    & \int_{B} p_{t}(x \mid y) \log \left( \frac{p_{t}(y \mid x)}{\hat{p}_{t}(y \mid x)} \right) \dd x \\
    &= \KL(P_{y, B} \mid\mid \hat{P}_{y, B}) - P_{y}(B^{c}) \log \left( \frac{P_{y}(B^{c})}{\hat P_y(B^c)} \right) - P_y(B) \log \left( \frac{\hat p_{t}(y)}{p_t(y)} \right) \\
    &\leq \KL(p_{t}(\cdot \mid y) \mid\mid \hat{p}_{t}(\cdot \mid y)) - P_{y}(B^{c}) \log \left( \frac{P_{y}(B^{c})}{\hat P_y(B^c)} \right) - P_y(B) \log \left( \frac{\hat p_{t}(y)}{p_t(y)} \right).
\end{align*}
By Lemma \ref{lem:posterior-KL-bound}, it holds that $\KL(p_{t}(\cdot \mid y) \mid\mid \hat{p}_{t}(\cdot \mid y)) \leq \frac{\varepsilon_t^2}{P_{\rm data}(y)}$. 
In addition, $\big|\hat P_y(B^c) - P_y(B^c)\big| \le \,\mathrm{TV}(P_y, \hat P_y) \leq \sqrt{\KL(P_y \parallel \hat P_y) / 2} \leq \varepsilon_t  / \sqrt{2 P_{\rm data}(y)}$, and by \cref{eq:B1} we have $\big| p_t(y) - \hat p_{t}(y) \big| \leq \mathbb{E}_{X_t \sim p_t}\big[\,\big|\hat{p}_{t}(y \mid X_{t}) - p_{t}(y \mid X_{t})\big|\,\big] \leq  \varepsilon_t  / \sqrt{2 }$. 
Using these upper bounds and the inequality $\log(1 + x) < x$, we have 
\begin{align}
\label{eq:upper-II}
\begin{split}
\mathrm{II} = & \int_{B} p_{t}(x \mid y) \log \left( \frac{p_{t}(y \mid x)}{\hat{p}_{t}(y \mid x)} \right) \dd x  \\
\leq & \frac{\varepsilon_t^2}{P_{\rm data}(y)} + P_y(B^c) \cdot \frac{|P_y(B^c) - \hat P_y(B^c)|}{P_y(B^c)} + \frac{|p_t(y) - \hat p_t(y)|}{p_t(y) - |p_t(y) - \hat p_t(y)|} \\
\leq & \frac{\varepsilon_t^2}{P_{\rm data}(y)} + \frac{ \varepsilon_t}{\sqrt{2} P_{\rm data}(y)^{1 / 2}} + \frac{\varepsilon_t}{\sqrt{2} (P_{\rm data}(y) - \varepsilon_t / \sqrt{2}) } \\
\leq & \frac{\varepsilon_t^2}{P_{\rm data}(y)} + \frac{ \varepsilon_t}{\sqrt{2} P_{\rm data}(y)^{1 / 2}} + \frac{\sqrt{2}\varepsilon_t}{ P_{\rm data}(y) }.
\end{split}
\end{align}
The proof is done by putting together \cref{eq:upper-I,eq:upper-II}.

\section{Additional numerical experiments}
\label{appendix:simulation}

This section contains additional numerical experiments, as displayed in Figure~\ref{fig:extra}. The basic setting is the same as that in Section~\ref{sec:regularity}. 

\begin{figure}
    \centering
    \includegraphics[width=\linewidth]{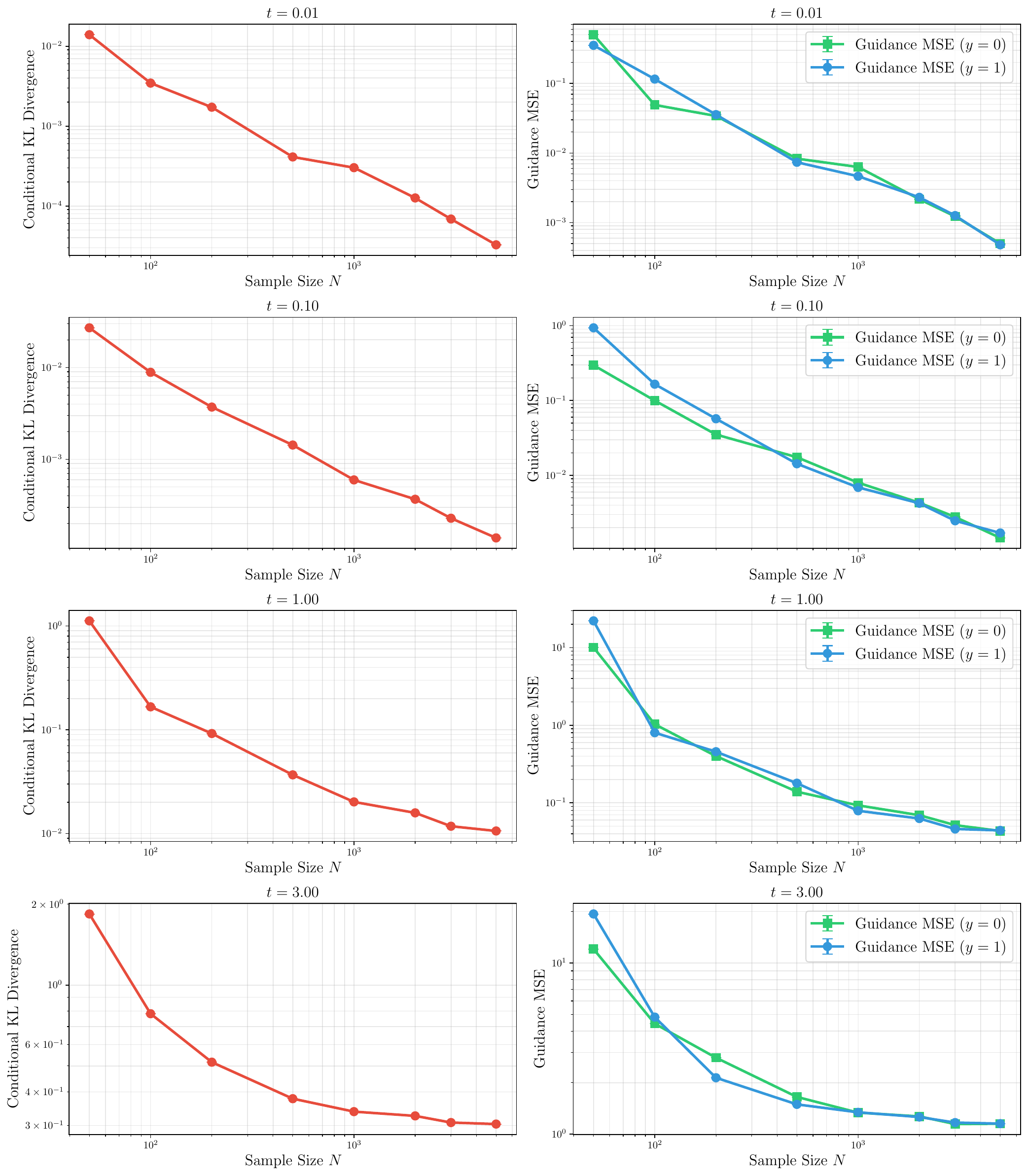}
    \caption{Numerical validation of Theorem~\ref{theorem: good-classifier-preserve-guidance} under a GMM. The first column displays the conditional KL divergence, and the second column shows the guidance MSE.
    Different rows correspond to different time $t$.}
    \label{fig:extra}
\end{figure}

\end{document}